\def \supp {\textrm{supp}}
\begin{document}

\title{Detecting Activations over Graphs using \\Spanning Tree Wavelet Bases}

\author[1,3]{
James Sharpnack
\thanks{jsharpna@cs.cmu.edu}}
\author[2]{
Akshay Krishnamurthy
\thanks{akshaykr@cs.cmu.edu}}
\author[1]{
Aarti Singh
\thanks{aarti@cs.cmu.edu}
}
\affil[1]{Machine Learning Department\\
Carnegie Mellon University}
\affil[2]{Computer Science Department\\
Carnegie Mellon University}
\affil[3]{Statistics Department\\
Carnegie Mellon University}

%

\maketitle

\begin{abstract}

We consider the detection of activations over graphs under Gaussian noise, where signals are piece-wise constant over the graph.
Despite the wide applicability of such a detection algorithm, there has been little success in the development of computationally feasible methods with prove-able theoretical guarantees for general graph topologies.
We cast this as a hypothesis testing problem, and first provide a universal necessary condition for asymptotic distinguishability of the null and alternative hypotheses.
We then introduce the spanning tree wavelet basis over graphs, a localized basis that
reflects the topology of the graph, and prove that for any spanning tree, this
approach can distinguish null from alternative in a low signal-to-noise regime.
Lastly, we improve on this result and show that using the uniform spanning tree
in the basis construction yields a randomized test with stronger theoretical
guarantees that in many cases matches our necessary conditions. 
Specifically, we obtain near-optimal performance in edge transitive graphs,
$k$-nearest neighbor graphs, and $\epsilon$-graphs.


\end{abstract}

\section{Introduction}
This paper focuses on the problem of detecting activations over a graph when observations are corrupted by noise. 
The problem of detecting graph-structured activations is relevant to many applications including identifying congestion in router and road networks, eliciting preferences in social networks, and detecting viruses in human and computer networks.
Furthermore, these applications require that the method is scalable to large graphs.
Luckily, computer science boasts a plethora of efficient graph based algorithms that we can adapt to the detection framework.

\subsection{Contributions}

In this paper, we will be testing if there is a non-zero piece-wise constant activation pattern on the graph given observations that are corrupted by Gaussian white noise.
We show that correctly distinguishing the null and alternative hypotheses is impossible if the signal-to-noise ratio does not grow quickly with respect to the allowable number of discontinuities in the activation pattern (Section 2). 
Since a test based on the scan statistic which matches the observations with all possible activation patterns by brute force is infeasible, we propose a Haar wavelet basis construction for general graphs, which is formed by hierarchically dividing a spanning tree of the graph (Section 3).
We find that the size and power of the test can be bounded in terms of the number of signal discontinuities and the spanning tree, immediately giving us a result for any spanning tree.
We then propose choosing a spanning tree uniformly at random (this can be done
efficiently), and show that this bound can be improved by a factor of the
average effective resistance of the edges across which the signal is non-constant (Section 4).
With this machinery in place we are able to show that for edge transitive
graphs, such as lattices, $k$-nearest neighbor graphs, and $\epsilon$ geometric random
graphs, our test is nearly-optimal in that the upper bounds match the
fundamental limits of detection up to logarithm factors (Section 5).

\subsection{Problem Setup}

Consider an undirected graph $G$ defined by a set of vertices $V$ ($|V| = n$) and undirected edges $E$ ($|E| = m$) which are unordered pairs of vertices.
Throughout this study we will assume that the graph $G$ is known.
The statistical setting that we will address is the normal means model,
\[
\yb = \xb + \epsilonb
\]
where $\xb, \in \RR^V$, $\epsilonb \sim N(0,\sigma^2\Ib_V)$, and $\sigma^2$ is known. 
Specifically, we assume that there are parameters $\rho, \mu$ (possibly dependent on $n$) such that 
\[
\Xcal = \{\xb \in \RR^V : |\{ (v,w) \in E : x_v \ne x_w \}| \le \rho, \|\xb\| \ge \mu\}
\] 
defines the class of possible $\xb$.
Hence, the possible signals have few edges across which the values of $\xb$ differ.
In graph-structured activation detection we are concerned with statistically testing the null and alternative hypotheses,
\begin{eqnarray}
\begin{aligned}
H_0:&\ \yb \sim N(\zero,\sigma^2 \Ib) \\
H_1:&\ \yb \sim N(\xb,\sigma^2 \Ib), \xb \in \Xcal\\
\end{aligned}
\label{eq:main_problem}
\end{eqnarray}
$H_0$ represents business as usual while $H_1$ encompasses all of the foreseeable anomalous activity.
Let a test be a mapping $T(\yb) \in \{0,1\}$, where $1$ indicates that we reject the null.

It is imperative that we control both the probability of false alarm, and the false acceptance of the null.
To this end, we define our measure of risk to be
\[
R(T) = \EE_{\zero} [T] + \sup_{\xb \in \Xcal} \EE_\xb [1 - T]
\]
where $\EE_\xb$ denote the expectation with respect to $\yb \sim N(\xb,\sigma^2 \Ib)$.
The test $T$ may be randomized, in which case the risk is $\EE_T R(T)$.
Notice that if the distribution of the random test $T$ is independent of $\xb$, then $\EE_T \sup_{\xb \in \Xcal} \EE_\xb [1 - T] = \sup_{\xb \in \Xcal} \EE_{T,\xb} [1 - T]$.
This is the setting of \cite{arias2011detection} which we should contrast to the Bayesian setup in \cite{addario2010combinatorial}.
We will say that $H_0$ and $H_1$ are {\em asymptotically distinguished} by a test, $T$, if $\lim_{n \rightarrow \infty} R(T) = 0$.
If such a test exists then $H_0$ and $H_1$ are asymptotically distinguished, otherwise they are asymptotically indistinguishable.   

To aid us in our study we introduce some mathematical terminology.
Let the edge-incidence matrix of $G$ be $\nabla \in \RR^{E \times V}$ such that for $(v,w) \in E$, $\nabla_{(v,w),v} = 1$, $\nabla_{(v,w),w} = -1$ (the order of $(v,w)$ is chosen arbitrarily) and is $0$ elsewhere.
For a vector, $\wb \in \RR^E$, $\supp(\wb) = \{v \in V : \wb \ne 0\}$ and $\|
\wb \|_0 = |\supp(\wb)|$, so $\|\nabla \xb\|_0 \le \rho$ for all $\xb \in \Xcal$.
We will be constructing spanning trees $\Tcal$ of the graph $G$, which are connected subsets of $E$ with no cycles.
Furthermore, we will denote the edge-incidence matrix of $\Tcal$ as $\nabla_\Tcal$.

\subsection{Related Work}

The statistical problem that we are addressing can be broadly classified as a high-dimensional Gaussian goodness-of-fit test.
This is a well studied problem when the structure of $H_1$ derives from a smooth function space such as an ellipsoid, Besov space or Sobolev space \cite{ingster1987minimax,ingster2003nonparametric}.
The function space $\Xcal$ that we are proposing is combinatorial in nature.
This statistical problem has only recently been studied theoretically \cite{addario2010combinatorial,arias2011detection}, although to the best of our knowledge none have addressed the problem under arbitrary graph structure.
More broadly, this work falls under the purview of multiple hypothesis testing, which has a rich history \cite{benjamini1995controlling}.
Unfortunately, aside from a few special cases \cite{hall2010innovated}, the multiple tests are assumed to be independent, making any such work not applicable to our setting.


In this paper, we evaluate our method by it's ability to distinguish $H_0$ from $H_1$, however
the procedure is based on constructing a wavelet basis over graphs which is 
relevant for other problems such as denoising and compression. 
Wavelets are multi-resolution bases that can represent inhomogeneous signals efficiently
using a few non-zero wavelet coefficients which makes them attractive for denoising, compression and detection.
As a result, they have been used extensively in mathematics, signal processing, statistics and physics \cite{mallat1999wavelet}.
They have also been used with great success in statistics, with extensive theoretical guarantees \cite{donoho1995adapting,hardle1998wavelets,vidakovic1999statistical}.
Recently there has been some attention paid to developing wavelets for graphs.
Unfortunately, most of these have either focused on graphs with a known hierarchical structure \cite{gavishICML10,ram2011generalized,singh2010detecting}, or do not come with approximation or sparsifying properties that can be used for our class of graph functions $\Xcal$ \cite{hammond2011wavelets,diff_wav}.

\section{Universal Lower Bound}

In order to more completely understand the problem of detecting anomalous activity in graphs, we prove that there is a universal minimum signal strength under which $H_0$ and $H_1$ are asymptotically indistinguishable.
The proof is based on a lemma developed in \cite{arias2008searching}, but the strategic use of this lemma is novel.
Our construction of the `worst case' prior gives a significantly tighter bound than would a more naive implementation. 
Indeed, it is interesting to note that the worst case prior is a uniform distribution of the largest unstructured signals that we are allowed in $H_1$ that are nearly disjoint.

\begin{theorem}
  \label{thm:lower_bound}
  Let the maximum degree of $G$ be $d_{\max}$.
  Consider the alternative, $H_1$, in which the cut size of each signal in $\Xcal$ is bounded by $\rho$, with $\lim_{n \rightarrow \infty} \rho = \infty$ and $\rho \le \bar d n$.  $H_0$ and $H_1$ are asymptotically indistinguishable if 
\[
\frac{\mu}{\sigma} = o \left( \sqrt{\min \{\frac{\rho}{d_{\max}},\sqrt{n}\}} \right)
\]
\end{theorem}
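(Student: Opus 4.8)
The plan is to prove indistinguishability by the standard second-moment (Ingster--Suslina) method: rather than reasoning about tests directly, I will exhibit a single prior $\pi$ supported on $\Xcal$ and show that the resulting mixture $\bar P_1 = \EE_{\xb \sim \pi}\, N(\xb, \sigma^2 \Ib)$ is statistically close to $P_0 = N(\zero, \sigma^2 \Ib)$. For any (possibly randomized) test $T$ one has $\EE_T R(T) \ge 1 - \|P_0 - \bar P_1\|_{TV} \ge 1 - \frac12 \sqrt{\chi^2(\bar P_1 \| P_0)}$, and for Gaussian location mixtures the chi-square divergence has the closed form $1 + \chi^2(\bar P_1 \| P_0) = \EE_{\xb, \xb' \sim \pi} \exp(\langle \xb, \xb'\rangle / \sigma^2)$ with $\xb, \xb'$ independent draws. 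So it suffices to build a $\pi$ for which this mixed exponential moment tends to $1$; that forces $R \to 1$ and rules out any asymptotically powerful test. The lemma of \cite{arias2008searching} enters exactly at the step of bounding this moment.

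The crux is the choice of prior. Set $s = \min\{\lfloor \rho/d_{\max}\rfloor, \lfloor\sqrt n\rfloor\}$ and let $\pi$ be uniform over the signals $\xb_R = (\mu/\sqrt s)\,\mathbf{1}_R$ as $R$ ranges over all $s$-subsets of $V$. These are the ``largest unstructured'' feasible signals: each satisfies $\|\xb_R\| = \mu$ exactly, and since the cut induced by a vertex set $R$ is at most $\sum_{v \in R}\deg(v) \le s\, d_{\max} \le \rho$, we get $\|\nabla \xb_R\|_0 \le \rho$ and hence $\xb_R \in \Xcal$ for \emph{every} graph of maximum degree $d_{\max}$ --- this is what makes the bound universal, and it is where $d_{\max}$ enters. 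The two terms of the minimum reflect the two ways the construction is limited: in the sparse regime $\rho/d_{\max}\le\sqrt n$ the support size is capped by the cut budget, while in the dense regime it is capped at $\sqrt n$ to keep a random pair of supports nearly disjoint.

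It remains to control the moment. Since $\langle \xb_R, \xb_{R'}\rangle = (\mu^2/s)\,|R\cap R'|$ and $|R\cap R'|$ is hypergeometric when $R, R'$ are independent uniform $s$-subsets of an $n$-set, I would invoke the sampling-without-replacement moment bound (the lemma of \cite{arias2008searching}, in the spirit of Hoeffding's domination of sampling without replacement by sampling with replacement) to obtain $1 + \chi^2 \le (1 + \frac sn(e^{\mu^2/(s\sigma^2)} - 1))^s$. The hypothesis $\mu/\sigma = o(\sqrt s)$, which is precisely $\mu/\sigma = o(\sqrt{\min\{\rho/d_{\max},\sqrt n\}})$ by our choice of $s$, drives the exponent $\lambda := \mu^2/(s\sigma^2)$ to zero, so eventually $e^\lambda - 1 \le 2\lambda$; applying $\log(1+x)\le x$ twice gives $\log(1+\chi^2) \le \frac{s^2}{n}(e^\lambda - 1) \le \frac{2s\mu^2}{n\sigma^2}$, and since $\mu^2/\sigma^2 = o(s)$ and $s^2 \le n$ this is $o(s^2/n) = o(1)$. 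Hence $\chi^2 \to 0$ and the hypotheses are asymptotically indistinguishable.

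I expect the main obstacle to be this final moment bound: obtaining control of the hypergeometric exponential moment $\EE\exp(\lambda|R\cap R'|)$ that is tight enough to recover \emph{both} regimes of the minimum at once, rather than the weaker $\min\{\rho/d_{\max},\, n d_{\max}/\rho\}$ that a fixed support size $s = \rho/d_{\max}$ would produce. This is where the ``strategic use'' of the lemma matters: the support size $s$ must be tuned to $\min\{\rho/d_{\max},\sqrt n\}$ to balance feasibility (the cut budget) against near-disjointness (small overlap), so that the linearized moment $\frac{s^2}{n}\lambda$ remains $o(1)$ in both the sparse and dense cases. The remaining ingredients --- the Gaussian chi-square identity, feasibility of $\xb_R$, and the Taylor estimate --- are routine.
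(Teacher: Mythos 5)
Your proposal is correct and follows essentially the same route as the paper: the identical prior (uniform over supports of size $\min\{\lfloor\rho/d_{\max}\rfloor,\lfloor\sqrt n\rfloor\}$ with signal $(\mu/\sqrt{s})\mathbf{1}_R$), the same second-moment risk bound (which is exactly the content of the cited lemma, here re-derived via the chi-square identity), the same Hoeffding domination of the hypergeometric overlap by a Binomial, and the same Bernoulli moment-generating-function limit exploiting $s \le \sqrt n$. The only cosmetic differences are a factor of $2$ in the exponent (the paper's lemma carries $\mu^2/2\sigma^2$) and your cleaner final Taylor estimate showing $\chi^2 \to 0$ directly, versus the paper's $(1+c/p)^p \to e^c$ argument for arbitrary $c>0$; neither affects the result.
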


\begin{proof}

We begin by constructing a prior distribution over $\Xcal$.
This portion of the proof derives from the analysis in \cite{arias2008searching} and closely mirrors that of \cite{addario2010combinatorial,arias2011detection}.
We will suppose that we have some subset $\Scal \subseteq 2^V$ such that we will draw an $S \in \Scal$ uniformly at random.
Then the signal is constructed $X = \frac{\mu}{\sqrt{|S|}} \one_S$ giving us a prior distribution $\pi$ over $\Xcal$.  Call the Bayes risk $R^*$.

\begin{lemma}
  \label{lem:ery}\cite{addario2010combinatorial}
  Let $S$ and $S'$ be drawn uniformly at random from $\Scal$.
  Then the Bayes risk $R^*$ is bounded by
  \[
  R^* \ge 1 - \frac{1}{2} \sqrt{\EE \exp \left( \frac{\mu^2}{2 \sigma^2} \frac{|S \cap S'|}{\sqrt{|S||S'|}} \right) - 1}
  \]
\end{lemma}

Hence, if $\EE \exp \left( \frac{\mu^2}{2 \sigma^2} \frac{|S \cap S'|}{\sqrt{|S||S'|}} \right) \rightarrow 1$, then $H_0$ and $H_1$ are asymptotically indistinguishable.
Let $p = \lfloor \min \{\rho /d_{\max},\sqrt{n}\} \rfloor$ and construct $\Scal$ to be all subsets of $V$ of size $p$.
Then,
\[
\EE \exp{ \frac{\mu^2}{2 \sigma^2} \frac{|S \cap S'|}{\sqrt{|S||S'|}}} = 
\EE \exp{ \frac{\mu^2}{2 p \sigma^2} |S \cap S'| }
\]

Let $\{z_i\}_{i = 1}^p$ be Bernoulli trials with success probability $p/n$.
We see that the distribution of $|S \cap S'|$ is invariant under conditioning on $S'$ and then it is equivalent to sampling without replacement from a population in which there are $p$ successes.
By Theorem 4 in \cite{hoeffding1963probability} we know that for $t > 0$, $\EE e^{t |S \cap S'|} \le \EE e^{t \sum_{i = 1}^p z_i}$.
Let $t = \frac{\mu^2}{2 p \sigma^2}$, by the generating function of Bernoulli random variables,
\[
\EE e^{t \sum_{i = 1}^p z_i} = \left( 1 + \frac pn \left( e^{\frac{\mu^2}{2 p \sigma^2}} - 1 \right)\right)^p
\]
By the assumption $\frac{\mu^2}{\sigma^2} = o(p)$ so for any $c > 0$ for $n$ large enough 
\[
\left( 1 + \frac pn \left( e^{\frac{\mu^2}{2 p \sigma^2}} - 1 \right)\right)^p \le 
\left( 1 + c \frac p n \right)^p \le \left( 1 + \frac c p \right)^p \rightarrow e^c
\]
because $p \le \sqrt{n}$.
Hence, $\EE e^{t |S \cap S'|} \rightarrow 1$. 
All that remains is to notice that the cut sizes of $S \in \Scal$ are bounded by $\rho$ because the cut sizes are bounded by $p d_{\max} \le \rho$.
\end{proof}

\section{Spanning Tree Wavelets}
\label{sec:wavelets}
In this section, we present an algorithm for constructing a wavelet basis given
a spanning tree and we characterize its performance for the detection
problem~\eqref{eq:main_problem}.

Informally, we would like to construct a basis $\Bb$ for which each edge $e \in
\mathcal{T}$ is activated by very few basis elements, where we say that an edge
$e$ is activated by element ${\bf b}$ if $e \in supp(\nabla_{\mathcal{T}}{\bf
  b})$. As we will show, upper bounding the number of basis elements that
activate any edge will be essential in analyzing the performance of our
estimator $||{\bf By}||_{\infty}$.

We construct our wavelet basis $\Bb$ recursively, by first finding a seed vertex
in the spanning tree such that the subtrees adjacent to the seed have at most
$\lceil n/2 \rceil$ vertices and then by including basis elements localized on
these subtrees in $\Bb$. We recurse on each subtree, adding higher-resolution
elements to our basis, and consequently constructing a complete wavelet
basis. The first phase of the algorithm ensures that the depth of the recursion
is at most $\lceil \log n\rceil$ and the second ensures that each edge is
activated by at most $\lceil \log d \rceil$ basis elements per recursive
call. Combining these two shows that each edge is activated by at most $\lceil
\log d \rceil \lceil \log n \rceil$ basis elements.


Finding a balancing vertex in the tree parallels the technique
in~\cite{pearl1986structuring}, which finds a balancing edge. The algorithm
starts from any vertex $v \in \mathcal{T}$ and moves along $\mathcal{T}$ to a
neighboring vertex $w$ that lies in the largest connected component of
$\mathcal{T} \setminus v$. The algorithm repeats this process (moving from $v$
to $w$) until the largest connected component of $\mathcal{T} \setminus w$ is
larger than the largest connected component of $\mathcal{T} \setminus v$ at
which point it returns $v$. We call this the {\em FindBalance} algorithm.


Once we have a balancing vertex $v$, we form wavelets that are constant over the
connected components of $\Tcal \backslash v$ such that any vertex is supported
by at most $\log d$ wavelets. Let $d_v$ be the degree of the balancing vertex
$v$ and let $c_1, \ldots c_{d_v}$ be the connected components of $\Tcal
\backslash v$ (with $v$ added to the smallest component). Our algorithm acts as
if $c_1, \ldots c_{d_v}$ form a chain structure and constructs the Haar wavelet
basis over them. We call this algorithm {\em FormWavelets}:

\begin{enumerate}
\item Let $C_1 = \cup_{i \le d_v/2} c_i$ and $C_2 = \cup_{i > d_v/2}$
\item Form the following basis element and add it to $\Bb$:
\[
\bb = \frac{\sqrt{|C_1||C_2|}}{\sqrt{|C_1| + |C_2|}} \left[ \frac{1}{|C_1|} \mathbf{1}_{C_1} - \frac{1}{|C_2|} \mathbf{1}_{C_2}  \right]
\]
\item Recurse at (1) with the subcomponents of $C_1$ and $C_2$ with partitions $\{ c_i \}_{i \le p/2}$ and $\{ c_i \}_{i > p/2}$ respectively.
\end{enumerate}

Our algorithm recursively constructs basis elements using the {\em FindBalance}
and {\em FormWavelets} routines on subtrees of $\mathcal{T}$. We initialize
$\Tcal$ to be a spanning tree of the graph and start with no elements in our
basis.
\begin{enumerate}
\item Let $v$ be the output of {\em FindBalance} applied to $\Tcal$.
\item Let $\Tcal_1,..,\Tcal_{d_v}$ be the connected components of $\Tcal
  \backslash v$ and add $v$ to the smallest component.
\item Add the basis elements constructed in {\em FormWavelets} when applied to $\Tcal_1,...,\Tcal_{d_v}$
\item For each $i \in [d_v]$, recursively apply (1) - (4) on $\Tcal_i$ as long
  as $|\Tcal_i| > 2$.
\end{enumerate}

As we will see, controlling the sparsity, $||{\bf Bx}||_0$ is essential in
analyzing the performance of the estimator $||{\bf By}||_{\infty}$. The main
theoretical guarantee of our basis construction algorithm is that signals with
small cuts in $G$ are sparse in $\Bb$. Specifically, we prove the following key
lemma in the appendix:

\begin{lemma}
\label{lem:tree_cut_bound}
Let $\nabla$ be the incidence matrix of $G$ and $\nabla_\Tcal$ be the incidence
matrix of $\Tcal$ (where $\Tcal$ has degree at most $d$).  Then $\norm{\nabla
  \xb}_0$ is the cut size of pattern $\xb \in \RR^{V(G)}$.  Then for any $\xb
\in \RR^{V(G)}$,
\begin{eqnarray}
\norm{\Bb \xb}_0 \le \norm{\nabla_\Tcal \xb}_0 \lceil \log d \rceil \lceil \log n \rceil \le \norm{\nabla \xb}_0
\lceil \log d \rceil \lceil \log n \rceil
\end{eqnarray}
\end{lemma}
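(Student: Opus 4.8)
The plan is to prove the two inequalities separately. The right-hand inequality is immediate: since every edge of the spanning tree $\Tcal$ is also an edge of $G$, the set of edges across which $\xb$ is discontinuous in $\Tcal$ is a subset of those in $G$, so $\norm{\nabla_\Tcal \xb}_0 \le \norm{\nabla \xb}_0$. All the work is in the left-hand inequality, which bounds the number of nonzero wavelet coefficients $\norm{\Bb \xb}_0 = |\{\bb : \langle \bb, \xb \rangle \ne 0\}|$.

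The key reduction I would use is an integration-by-parts identity on the tree. Every basis element $\bb$ produced by {\em FormWavelets} is mean-zero (its positive and negative lobes are balanced by construction), so it lies in the row space of $\nabla_\Tcal$; since $\Tcal$ is a tree, there is a unique edge-flow $\wb^{(\bb)} \in \RR^{E(\Tcal)}$ with $\bb = \nabla_\Tcal^\top \wb^{(\bb)}$, and explicitly $w^{(\bb)}_e = \sum_{u \in A_e} b_u$ where $A_e$ is one of the two components of $\Tcal$ with $e$ removed. Then $\langle \bb, \xb \rangle = \langle \wb^{(\bb)}, \nabla_\Tcal \xb \rangle$, so a coefficient can be nonzero only if the flow $\wb^{(\bb)}$ is supported on some edge across which $\xb$ is discontinuous. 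Summing over basis elements gives $\norm{\Bb \xb}_0 \le \sum_{e : (\nabla_\Tcal \xb)_e \ne 0} |\{\bb : w^{(\bb)}_e \ne 0\}|$, so it suffices to show that every fixed tree edge carries the flow of at most $\lceil \log d \rceil \lceil \log n \rceil$ basis elements.

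To bound this per-edge count I would exploit the two-level structure of the construction, mirroring the informal argument in the text. First, {\em FindBalance} returns a centroid-like vertex whose removal leaves every component of size at most $\lceil n/2 \rceil$, so the recursion depth is at most $\lceil \log n \rceil$, and a fixed edge $e$ lies in a nested chain of at most $\lceil \log n \rceil$ subtrees on which a {\em FormWavelets} call acts. Second, within a single call the wavelets form a balanced Haar basis over the chain $c_1, \ldots, c_{d_v}$ of components of $\Tcal \setminus v$, and each such wavelet is \emph{constant on every component} $c_i$. Hence the flow of a wavelet across any edge $e$ contained in, or incident to, the component $c_i$ equals its constant value on $c_i$ times $|A_e|$, which is nonzero exactly when $c_i$ lies inside the dyadic block defining that wavelet. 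Since a fixed position $i$ belongs to at most $\lceil \log d_v \rceil \le \lceil \log d \rceil$ dyadic blocks of a balanced binary tree over $d_v \le d$ leaves, each call contributes at most $\lceil \log d \rceil$ flows across $e$. Multiplying the two factors yields the per-edge bound $\lceil \log d \rceil \lceil \log n \rceil$, and combining with the reduction above finishes the proof.

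I expect the main obstacle to be this second step, the per-call analysis, together with establishing the depth guarantee for {\em FindBalance}. The delicate point is that the relevant quantity is the support of the tree-flow $\wb^{(\bb)}$ --- the edges across which the wavelet routes mass --- which is strictly larger than $\supp(\nabla_\Tcal \bb)$, since a wavelet that is constant across an edge may still carry flow through it; one must therefore count the right object and track both the edges interior to a component and the spoke edges incident to the balance vertex consistently across recursion levels. Verifying that {\em FindBalance} indeed halts at a vertex achieving the $\lceil n/2 \rceil$ balance, and that the per-call count equals the depth of the component chain's Haar tree, are the two technical cruxes; everything else is bookkeeping.
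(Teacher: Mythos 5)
Your proposal is correct, and its combinatorial core --- charging each nonzero coefficient to a discontinuity edge lying inside the subtree that defines the wavelet, then bounding the per-edge count by $\lceil \log d \rceil$ per \emph{FormWavelets} call times $\lceil \log n \rceil$ recursion levels of \emph{FindBalance} --- is exactly the paper's two-level counting argument. Where you genuinely differ is in how the charging step is justified, and your route is the more rigorous one. The paper defines ``$e$ is activated by $\bb$'' as $e \in \supp(\nabla_\Tcal \bb)$ and asserts that $\bb^\top \xb \ne 0$ implies some activated edge lies in $\supp(\nabla_\Tcal \xb)$; taken literally this is insufficient, because $\nabla_\Tcal \bb$ is supported only on edges between the two lobes $C_1, C_2$ (and edges leaving their union), whereas the discontinuity of $\xb$ forced by $\bb^\top \xb \ne 0$ may sit on an edge interior to a single component $c_i$, across which $\bb$ is constant. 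The paper's proof implicitly repairs this by silently reinterpreting activation as membership of $e$ in the defining subtree. Your flow identity $\bb = \nabla_\Tcal^\top \wb^{(\bb)}$, giving $\bb^\top \xb = \langle \wb^{(\bb)}, \nabla_\Tcal \xb \rangle$, makes that repair explicit: a nonzero coefficient forces a discontinuity edge carrying nonzero flow, and $\supp(\wb^{(\bb)})$ is confined to the defining subtree precisely because $\bb$ is mean-zero and supported there --- which is the observation you flag about the flow support being strictly larger than $\supp(\nabla_\Tcal \bb)$. In short, you identified and fixed the precise imprecision in the paper's stated definition, while the remaining ingredients (the \emph{FindBalance} halving guarantee and the dyadic-block count of $\lceil \log d \rceil$ per call) match the paper's appendix proofs.
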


Equipped with Lemma~\ref{lem:tree_cut_bound} we can now characterize the
performance of the estimator $||{\bf By}||_{\infty}$ on any signal ${\bf
  x}$. Our bound depends on the choice of spanning tree $\mathcal{T}$,
specifically via the quantity $||\nabla_{\mathcal{T}}{\bf x}||_0$, the cut size
of ${\bf x}$ in $\mathcal{T}$.
The proof of the following can be found in the appendix.

\begin{theorem}
\label{thm:detection_bd}
Perform the test in which we reject the null if $\norm{\Bb \yb}_\infty > \tau$.  
Set $\tau = \sigma \sqrt{2 \log (n/\delta)}$.  If
\begin{eqnarray}
\frac{\mu}{\sigma} \ge \sqrt{2 \norm{\nabla_\Tcal \xb}_0 \lceil \log d \rceil \lceil \log n \rceil} (\sqrt{\log (1 / \delta)} + \sqrt{\log(n/\delta)})
\end{eqnarray}
then under $H_0$, $\PP \{ \textrm{Reject} \} \le \delta$, and under $H_1$, $\PP \{ \textrm{Reject} \} \ge 1 - \delta$.
\end{theorem}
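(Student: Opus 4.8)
The plan is to decompose $\Bb \yb = \Bb \xb + \Bb \epsilonb$ and exploit that $\Bb$ is an orthonormal basis, so that $\Bb \epsilonb \sim N(\zero, \sigma^2 \Ib)$ has i.i.d.\ $N(0,\sigma^2)$ coordinates and $\norm{\Bb \xb}_2 = \norm{\xb}_2 \ge \mu$. The argument then splits into a Type~I (size) bound under $H_0$ and a Type~II (power) bound under $H_1$, and the asymmetry between the threshold $\tau = \sigma\sqrt{2\log(n/\delta)}$ and the required signal strength $\sqrt{\log(1/\delta)}+\sqrt{\log(n/\delta)}$ will arise from whether or not a union bound over all $n$ coordinates is needed.

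For the size bound, under $H_0$ we have $\Bb\yb = \Bb\epsilonb$, so $\PP\{\mathrm{Reject}\} = \PP\{\norm{\Bb\epsilonb}_\infty > \tau\}$. I would apply a union bound over the (at most $n$) coordinates together with the Gaussian tail inequality $\PP\{|Z| > t\} \le e^{-t^2/2}$ for $Z \sim N(0,1)$. With $\tau/\sigma = \sqrt{2\log(n/\delta)}$ this gives $\PP\{\mathrm{Reject}\} \le n\, e^{-(\tau/\sigma)^2/2} = n \cdot (\delta/n) = \delta$, as required.

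For the power bound, the key input is Lemma~\ref{lem:tree_cut_bound}, which guarantees $\norm{\Bb\xb}_0 \le s := \norm{\nabla_\Tcal \xb}_0 \lceil \log d\rceil \lceil \log n \rceil$. Since $\Bb\xb$ has at most $s$ nonzero entries yet Euclidean norm at least $\mu$, a pigeonhole argument yields a coordinate $j^*$ with $|(\Bb\xb)_{j^*}| \ge \mu/\sqrt{s}$. I would then focus the entire argument on this single coordinate: the test rejects whenever $|(\Bb\yb)_{j^*}| > \tau$, and $|(\Bb\yb)_{j^*}| \ge |(\Bb\xb)_{j^*}| - |(\Bb\epsilonb)_{j^*}| \ge \mu/\sqrt{s} - |(\Bb\epsilonb)_{j^*}|$. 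Because we examine only this one fixed coordinate, no union bound is incurred on the noise: with probability at least $1-\delta$ we have $|(\Bb\epsilonb)_{j^*}| \le \sigma\sqrt{2\log(1/\delta)}$. On this event $|(\Bb\yb)_{j^*}| \ge \mu/\sqrt{s} - \sigma\sqrt{2\log(1/\delta)}$, and the hypothesized condition $\mu \ge \sigma\sqrt{2s}\,(\sqrt{\log(1/\delta)} + \sqrt{\log(n/\delta)})$ rearranges to exactly $\mu/\sqrt{s} - \sigma\sqrt{2\log(1/\delta)} \ge \sigma\sqrt{2\log(n/\delta)} = \tau$, so the test rejects with probability at least $1-\delta$.

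I expect the delicate points to be bookkeeping rather than conceptual. The first is verifying that $\Bb$ is genuinely orthonormal, so that $\Bb\epsilonb$ remains isotropic Gaussian and $\Bb$ preserves the $\ell_2$ norm of the signal; this should follow from the Haar construction of Section~\ref{sec:wavelets}. The second, and the crux of the proof, is the pigeonhole step: converting the $\ell_2$ lower bound $\norm{\Bb\xb}_2 \ge \mu$ into an $\ell_\infty$ lower bound requires precisely the sparsity control $\norm{\Bb\xb}_0 \le s$ from Lemma~\ref{lem:tree_cut_bound}, and it is this that lets a single wavelet coefficient carry enough of the signal energy to clear the detection threshold. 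Matching the constants then reduces to the algebraic rearrangement of the signal-strength hypothesis noted above.
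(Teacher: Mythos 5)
Your proposal is correct and follows essentially the same route as the paper's own proof: a union bound over the $n$ coordinates for the Type~I error, and for the power a single-coordinate argument at the maximizing index of $\Bb\xb$, where the sparsity bound of Lemma~\ref{lem:tree_cut_bound} converts the $\ell_2$ lower bound $\norm{\xb}_2 \ge \mu$ into the $\ell_\infty$ bound $\norm{\Bb\xb}_\infty \ge \mu/\sqrt{s}$ via pigeonhole, followed by the same algebraic rearrangement. The orthonormality of $\Bb$ that you flag is indeed used implicitly by the paper in exactly the two places you identify (isotropy of $\Bb\epsilonb$ and $\norm{\Bb\xb}_2 = \norm{\xb}_2$).
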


\begin{remark}
\label{rem:simple_bound}
For any tree we have $\norm{\nabla_\Tcal{\bf x}}_0 \le \norm{\nabla
  {\bf x}}_0$ for all patterns ${\bf x}$, so that for the sparse cut alternative
we can have both Type I and Type II errors $\le \delta$ as long as:
\begin{eqnarray}
\frac{\mu}{\sigma} \ge \sqrt{2 \rho \lceil \log d \rceil \lceil \log n \rceil} (\sqrt{\log (1 / \delta)} + \sqrt{\log(n/\delta)})
\end{eqnarray}
\end{remark}

\section{Uniform Spanning Tree Basis}

The uniform spanning tree (UST) is a spanning tree generation technique that we will use to construct wavelet bases.
We will first examine the deep connection between electrical networks, USTs and random walks.
Because the UST is randomly generated, the test statistic, $\| \Bb_\Tcal \yb \|$ when conditioned on $\yb$ will also be random.
Due to results from cut sparsification, we can relate the performance of the UST wavelet detector to effective resistances.

\subsection{Cuts and Effective Resistance}

Effective resistances have been extensively studied in electrical network theory.
We define the combinatorial Laplacian of $G$ to be $\Delta = \nabla^\top \nabla$.
A {\em potential difference} is any $\zb \in \RR^E$ such that it satisfies {\em Kirchoff's potential law}: the total potential difference around any cycle is $0$.
Algebraically, this means that $\exists \xb \in \RR^V$ such that $\nabla \xb = \zb$.
The {\em Dirichlet Principle} states that any solution to the following program gives an absolute potential $\xb$ that satisfies Kirchoff's potential law:
\[
\textrm{min. } \xb^\top \Delta \xb \textrm{ s.t. } \xb_S = \vb_S
\]
for source/sinks $S \subset V$ and some voltage constraints $\vb_S \in \RR^S$.  
The realized objective $\xb^\top \Delta \xb$ is known as the {\em total energy} of the system.
By Lagrangian calculus, the solution to the above program is given by $\xb = \Delta^\dagger \vb$ where $\dagger$ indicates the Moore-Penrose pseudoinverse.
The effective resistance is the total energy of a system in which $v,w \in V$ are the source and sink respectively and a unit flow from $v$ to $w$ is induced.
Hence, the effective resistance between $v$ and $w$ is $r_{v,w} = (\delta_v - \delta_w)^\top \Delta^\dagger (\delta_v - \delta_w)$, where $\delta_v$ is the Dirac delta function.

A massively useful characterization of effective resistance is the random walk interpretation.
Let $X_t$ be the location of a random walker on $G$ at time $t$.
The hitting time $H(v,w)$ is then 
\[
H(v,w) = \EE [\min \{ t > 0 : X_t = w\} | X_0 = v]
\]
We find that the effective resistance is related to the hitting time by,
\[
r_{v,w} = \frac{H(v,w) + H(w,v)}{2m}
\]
The numerator is also known as the commute time.
As we will see, this characterization of effective resistance is useful when bounding it for specific graph models.

\subsection{UST Wavelet Detector}

In our framework, we are given the opportunity to evaluate our test according to our random algorithm.
We will now examine the performance of the spanning tree wavelet detector, when the spanning tree is drawn according to a UST.
First, we will explore the construction of the UST and examine key properties.
The UST is a random spanning tree, chosen uniformly at random from the set of all distinct spanning trees.
The foundational Matrix-Tree theorem \cite{kirchhoff1847ueber} describes the probability of an edge being included in the UST.
The following lemma can be found in \cite{lovasz1993random} and \cite{lyons2000probability}.

\begin{lemma}
\label{lem:mat_tree}
Let $G$ be a graph and $\Tcal$ a draw from $UST(G)$. 
\[
\PP \{ e \in \Tcal \} = r_e
\]
\end{lemma}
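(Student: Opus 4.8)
The plan is to express both quantities in the claimed identity as ratios of spanning-tree counts and verify they agree. Fix the edge $e = (v,w)$, write $r_e = r_{v,w} = (\delta_v - \delta_w)^\top \Delta^\dagger (\delta_v - \delta_w)$ as in the preamble, and let $\kappa(H)$ denote the number of spanning trees of a (multi)graph $H$. The Matrix-Tree theorem of \cite{kirchhoff1847ueber} supplies the engine: grounding the Laplacian at any vertex $r$ (deleting its row and column to form the reduced Laplacian $L_r$) gives $\det(L_r) = \kappa(G)$. I would reduce everything to this determinant and its minors.

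First I would handle the UST side. Since $\Tcal$ is uniform over spanning trees, $\PP\{e \in \Tcal\} = \kappa(G)^{-1}\cdot\#\{\text{spanning trees containing } e\}$. Deletion-contraction partitions spanning trees into those avoiding $e$ (in bijection with spanning trees of $G \setminus e$) and those using $e$ (in bijection with spanning trees of the contraction $G/e$, by contracting/expanding $e$). Hence $\PP\{e \in \Tcal\} = \kappa(G/e)/\kappa(G)$. It is also worth recording the equivalent combinatorial description I will need on the other side: removing $e$ from a spanning tree that contains it leaves a spanning $2$-forest in which $v$ and $w$ lie in distinct components, and this correspondence is a bijection, so $\kappa(G/e)$ equals the number of spanning $2$-forests separating $v$ from $w$.

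Next I would compute $r_e$ electrically. Grounding at $w$ and injecting a unit current at $v$, the potentials $\xb$ (with $x_w = 0$) solve $L_w\,\xb = \delta_v$ on $V \setminus \{w\}$, and the effective resistance is the induced voltage drop $r_e = x_v - x_w = x_v = (L_w^{-1})_{vv}$. Applying Cramer's rule to this diagonal entry gives $r_e = \det(L_{\{v,w\}})/\det(L_w) = \det(L_{\{v,w\}})/\kappa(G)$, where $L_{\{v,w\}}$ is the Laplacian with rows and columns $v,w$ deleted (the cofactor sign is $(-1)^{2v} = +1$). Invoking the all-minors (two-forest) form of the Matrix-Tree theorem, $\det(L_{\{v,w\}})$ equals the number of spanning $2$-forests of $G$ separating $v$ from $w$, which by the bijection above is exactly $\kappa(G/e)$. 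Chaining the two computations yields $r_e = \kappa(G/e)/\kappa(G) = \PP\{e \in \Tcal\}$.

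The main obstacle is the two-forest generalization of the Matrix-Tree theorem, i.e.\ identifying $\det(L_{\{v,w\}})$ with the count of spanning $2$-forests separating $v$ and $w$. The standard route is a Cauchy-Binet expansion of $L_{\{v,w\}} = \nabla_{\{v,w\}}^\top \nabla_{\{v,w\}}$ over size-$(n-2)$ edge subsets, where the crux is the combinatorial evaluation of each maximal minor of the doubly-reduced incidence matrix: it is $\pm 1$ precisely when the corresponding edge set is a spanning forest with $v$ and $w$ in different trees, and $0$ otherwise. Verifying this (via the unimodularity of $\nabla$ and an induction on forest structure) is the one nonroutine ingredient; the electrical and deletion-contraction steps are then bookkeeping. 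I note that the same conclusion follows more slickly by observing that $r_e$ is the leverage score $P_{ee}$ of the orthogonal projection $P = \nabla_0 L_r^{-1} \nabla_0^\top$ onto the cut space, and that the diagonal of $P$ gives the marginal inclusion probabilities of the determinantal spanning-tree process; I would, however, present the Matrix-Tree argument as the self-contained version consistent with the sources \cite{lovasz1993random,lyons2000probability}.
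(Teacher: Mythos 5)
Your proof is correct, but it should be noted that the paper does not prove this lemma at all: it is quoted from the literature (\cite{lovasz1993random}, \cite{lyons2000probability}), with Kirchhoff's Matrix-Tree theorem \cite{kirchhoff1847ueber} mentioned only as the underlying engine. What you have supplied is the self-contained classical argument that those citations stand in for: (i) deletion--contraction gives $\PP\{e \in \Tcal\} = \kappa(G/e)/\kappa(G)$, together with the bijection between spanning trees of $G/e$ and spanning $2$-forests of $G$ separating $v$ from $w$; (ii) grounding at $w$ and Cramer's rule give $r_e = \det(L_{\{v,w\}})/\det(L_w)$; (iii) the Matrix-Tree theorem and its all-minors extension identify these determinants as $\kappa(G)$ and the number of separating $2$-forests, respectively. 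All three steps are sound, and you correctly isolate the all-minors step (the $\pm 1$ versus $0$ evaluation of the maximal minors of the doubly-reduced incidence matrix, via Cauchy--Binet) as the one ingredient requiring real work; the case that kills the non-separating $2$-forests is that the columns of the component containing neither $v$ nor $w$ sum to zero, which is the detail a careful write-up should include. Your closing observation, that $r_e$ is a diagonal entry of the projection onto the cut space and that this diagonal gives the edge marginals of the determinantal spanning-tree measure, is essentially the transfer-current route taken in \cite{lyons2000probability}, so either presentation is faithful to the paper's sources; the Matrix-Tree version you chose to develop is the more elementary and matches the emphasis of \cite{kirchhoff1847ueber} and \cite{lovasz1993random}.
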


Hence, we can expect that for a given cut in the graph, that the cut size in the tree will look like the sum of edge effective resistances.
While it is infeasible to enumerate all spanning trees of a graph, the
Aldous-Broder algorithm is an efficient method for generating a draw from
$UST(G)$~\cite{aldous1990random}. The algorithm simulates
 a random walk on $G$, $\{X_t\}$, stops when all of the vertices
have been visited, and defines the spanning tree $\Tcal$ by the edges $\{(X_{H(X_0,v) - 1}, v) : v \in V\}$.

In order to control $\|\nabla_\Tcal \xb \|_0$, we need to control the overlap between a cut and the UST. 
Clearly the UST does not independently sample edges, but it does have the well documented property of negative association, that the inclusion of an edge decreases the probability that another edge is included.
The following lemma states a concentration result for the UST, based on negative association, and can be found in \cite{fung2010graph}.
The proof is a simple extension of the concentration results in \cite{gandhi2006dependent}.

\begin{lemma}
\label{lem:UST_conc}
Let $B \subset E$ be a fixed subset of edges, and $|\Tcal \cap B|$ denote the number of edges in $\Tcal$ also in $B$.
\[
\PP \{ |\Tcal \cap B| \ge (1 + \delta) \sum_{e \in B} r_e \} \le \left( \frac{e^\delta}{(1 + \delta)^{1 + \delta}} \right)^{\sum_{e \in B} r_e} 
\]
\end{lemma}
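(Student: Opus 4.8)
The plan is to recognize that the claimed bound is exactly the standard multiplicative Chernoff upper-tail bound for a sum of Bernoulli variables with mean $\sum_{e \in B} r_e$, and then to verify that the proof of that bound survives when independence is replaced by the negative association enjoyed by the UST edge indicators. Write $X_e = \mathbf{1}\{e \in \Tcal\}$ for each $e \in B$, so that $|\Tcal \cap B| = \sum_{e \in B} X_e$. By Lemma~\ref{lem:mat_tree} we have $\EE X_e = r_e$, hence $\EE |\Tcal \cap B| = \sum_{e \in B} r_e =: \mu_B$, which is precisely the quantity in the exponent of the bound.

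First I would apply the exponential Markov inequality: for any $t > 0$,
\[
\PP \{ |\Tcal \cap B| \ge (1 + \delta) \mu_B \} \le e^{-t(1+\delta)\mu_B} \, \EE \exp\left( t \sum_{e \in B} X_e \right).
\]
The entire content of the lemma then reduces to controlling the joint moment generating function $\EE \prod_{e \in B} e^{t X_e}$. For independent indicators this factorizes exactly; here the key step is to show the factorization survives as an inequality,
\[
\EE \prod_{e \in B} e^{t X_e} \le \prod_{e \in B} \EE \, e^{t X_e}.
\]
This is where the negative association of the UST enters: since each map $z \mapsto e^{tz}$ is non-decreasing and the $X_e$ are negatively associated (the documented property that conditioning on one edge being present only decreases the inclusion probabilities of the others), the expectation of the product of these monotone functions is dominated by the product of their expectations. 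This is exactly the inequality established for negatively associated variables in \cite{gandhi2006dependent}, and it is the single nontrivial input to the argument.

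Once the factorization is in hand, the remainder is the routine Chernoff computation. Each factor satisfies $\EE e^{t X_e} = 1 + r_e(e^t - 1) \le \exp(r_e(e^t - 1))$ by $1 + z \le e^z$, so the product is at most $\exp((e^t - 1)\mu_B)$. Substituting back gives
\[
\PP \{ |\Tcal \cap B| \ge (1 + \delta)\mu_B \} \le \exp\left( \mu_B \left[ (e^t - 1) - t(1 + \delta) \right] \right),
\]
and minimizing the exponent over $t > 0$ at the standard choice $t = \log(1 + \delta)$ produces the exponent $\mu_B\left[\delta - (1+\delta)\log(1+\delta)\right]$, which is precisely $(e^\delta / (1+\delta)^{1+\delta})^{\mu_B}$.

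I expect the main obstacle to be entirely the negative-association step; everything else is the textbook Chernoff derivation. I would therefore concentrate the proof on establishing (or citing precisely) that the UST edge-inclusion indicators are negatively associated and that this yields the product bound on the moment generating function of monotone functions, leaning on the framework of \cite{gandhi2006dependent} together with the classical fact that the UST exhibits negative association, rather than re-deriving it from the Matrix-Tree theorem.
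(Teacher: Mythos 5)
Your proof is correct, and it is essentially the argument the paper implicitly relies on: the paper does not prove Lemma~\ref{lem:UST_conc} at all, but cites \cite{fung2010graph} and remarks that the proof is a simple extension of the concentration results in \cite{gandhi2006dependent}. Your reconstruction --- exponential Markov inequality, factorization of the moment generating function via negative dependence, then the standard optimization $t = \log(1+\delta)$ yielding the exponent $\mu_B[\delta - (1+\delta)\log(1+\delta)]$ --- is exactly the chain of reasoning behind those citations, so on the paper's side there is nothing to compare beyond the pointer to the literature.

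One caveat deserves emphasis. Your parenthetical gloss of negative association (``conditioning on one edge being present only decreases the inclusion probabilities of the others'') describes \emph{pairwise} negative correlation, which is strictly weaker than negative association and is \emph{not} by itself sufficient for the factorization $\EE \prod_{e \in B} e^{tX_e} \le \prod_{e \in B} \EE\, e^{tX_e}$. (The paper's own prose makes the same conflation.) What the argument needs is either full negative association, or --- since the $X_e$ are binary, so that $e^{tX_e} = 1 + (e^t-1)X_e$ and the product expands into cylinder terms --- the subset-wise inequality $\EE \prod_{e \in S} X_e \le \prod_{e \in S} r_e$ for every $S \subseteq B$. For the UST this stronger property is genuinely available: it follows from the transfer current theorem (the UST is a determinantal measure) together with Hadamard's inequality, or from the negative association of determinantal processes; the subset-wise route is precisely how \cite{gandhi2006dependent} and its antecedents derive Chernoff bounds for dependent variables. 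So your proof stands, provided the negative-dependence input is cited at the level of the subset or negative-association property rather than the pairwise one.
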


We use this result to give conditions under which the UST wavelet detector asymptotically distinguishes $H_0$ from $H_1$.

\begin{theorem}
\label{thm:Rperf}
Let $r_{\max} = \max_{\xb \in \Xcal} \sum_{e \in \supp(\nabla \xb)} r_e$ (the maximum effective resistance of a cut in $\Xcal$).
If
\[
\frac{\mu}{\sigma} = \omega \left( \sqrt{r_{\max} \log d} \log n \right)
\]
then $H_0$ and $H_1$ are asymptotically distinguished by the test statistic $\| \Bb \yb \|_\infty$ where $\Bb$ is the UST wavelet basis.
\end{theorem}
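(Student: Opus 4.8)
The plan is to reduce this statement to the deterministic guarantee of Theorem~\ref{thm:detection_bd} by conditioning on the random tree $\Tcal$, and then to control the only tree-dependent quantity appearing there, namely the tree cut size $\|\nabla_\Tcal\xb\|_0$, using the UST concentration bound of Lemma~\ref{lem:UST_conc}. Two observations make the reduction clean. First, because $\Bb$ is an orthonormal basis, under $H_0$ the vector $\Bb\yb$ is again $N(\zero,\sigma^2\Ib)$ regardless of which tree was drawn; hence with $\tau = \sigma\sqrt{2\log(n/\delta)}$ a union bound over the $n$ Gaussian coordinates gives $\PP_\zero\{\|\Bb\yb\|_\infty > \tau\}\le\delta$ for every realization of $\Tcal$, so the Type I contribution to the risk is at most $\delta$. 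Second, since the tree is drawn independently of $\xb$, I will invoke the remark following the risk definition and bound, for each fixed $\xb\in\Xcal$, the joint acceptance probability $\PP_{\Tcal,\epsilonb}\{\|\Bb\yb\|_\infty\le\tau\}$, uniformly in $\xb$.

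Fix $\xb\in\Xcal$ and let $B=\supp(\nabla\xb)$ be its cut, so that $\|\nabla_\Tcal\xb\|_0 = |\Tcal\cap B|$ and $\sum_{e\in B}r_e \le r_{\max}$ by definition of $r_{\max}$. I would choose a threshold $k$ equal to a slowly growing multiple of $r_{\max}$ and apply Lemma~\ref{lem:UST_conc} with the deviation $\delta'$ determined by $(1+\delta')\sum_{e\in B}r_e = k$; the resulting Chernoff factor then bounds the bad-tree probability $\PP_\Tcal\{|\Tcal\cap B| > k\}$. On the complementary good-tree event we have $\|\nabla_\Tcal\xb\|_0\le k$, so Theorem~\ref{thm:detection_bd} applies with cut size $k$ and yields $\PP_\epsilonb\{\textrm{accept}\}\le\delta$. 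Combining the two cases by the law of total probability gives, uniformly over $\xb$,
\[
\PP_{\Tcal,\epsilonb}\{\textrm{accept}\} \le \PP_\Tcal\{|\Tcal\cap B|>k\} + \delta,
\]
and it remains to choose the free parameters so that both terms vanish.

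On the good-tree event the signal strength required by Theorem~\ref{thm:detection_bd} is of order $\sqrt{k\lceil\log d\rceil\lceil\log n\rceil}\,(\sqrt{\log(1/\delta)}+\sqrt{\log(n/\delta)})$. Taking $k$ a constant multiple of $r_{\max}$ and $\delta$ vanishing polynomially in $n$ makes the bracketed term of order $\sqrt{\log n}$, so the requirement becomes $\mu/\sigma$ at least a constant times $\sqrt{r_{\max}\log d}\,\log n$, which is precisely what the hypothesis $\mu/\sigma=\omega(\sqrt{r_{\max}\log d}\,\log n)$ supplies. Concretely, the $\omega$ hypothesis yields a slack factor $g_n\to\infty$ with $\mu/\sigma\ge g_n\sqrt{r_{\max}\log d}\,\log n$, and I would split this slack: letting $k$ grow like $g_n^{2/3}r_{\max}$ keeps $\sqrt{k\log d}\,\log n$ below the available budget (since $g_n^{1/3}$ grows slower than $g_n$) while forcing the deviation $\delta'$ up, which drives the Chernoff factor of Lemma~\ref{lem:UST_conc} to zero; simultaneously one takes $\delta=\delta_n\to 0$.

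The main obstacle I anticipate is the joint calibration of the UST deviation $\delta'$ (equivalently the threshold $k$) against the error level $\delta$, particularly in the regime where $r_{\max}=O(1)$: there the exponent $\sum_{e\in B}r_e$ in Lemma~\ref{lem:UST_conc} is bounded, so the concentration is weak, and driving the bad-tree probability to zero forces $k$ and hence $\delta'$ to grow, an increase that must be absorbed into the $\omega$ slack without overspending the SNR budget. A secondary point to verify is uniformity over $\xb$: since the per-cut bound is monotone increasing in $\sum_{e\in B}r_e$ and this sum is maximized at $r_{\max}$, the single application of Lemma~\ref{lem:UST_conc} at the extremal value dominates every $\xb\in\Xcal$, so no union bound over the exponentially many cuts of $G$ is needed once the remark has been invoked.
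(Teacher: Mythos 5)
Your proposal follows essentially the same route as the paper's own proof: condition on the tree, invoke Lemma~\ref{lem:UST_conc} to bound the tree cut size $\|\nabla_\Tcal\xb\|_0 = |\Tcal\cap\supp(\nabla\xb)|$ with high probability, plug that bound into Theorem~\ref{thm:detection_bd}, and combine the bad-tree and conditional-error probabilities under the fixed-signal (tree-independent-of-$\xb$) risk interpretation. The only difference is bookkeeping --- you calibrate an explicit threshold $k = g_n^{2/3} r_{\max}$ against the $\omega$ slack, whereas the paper inverts the Chernoff bound into the form $r_B + \sqrt{2 r_B \log(1/\gamma)} + \log(1/\gamma)$ and leaves the choice of $\gamma$ implicit --- so your write-up is, if anything, more explicit about the parameter tuning (including the $r_{\max}=O(1)$ subtlety) than the paper itself.
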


\begin{proof}
Let $r_B = \sum_{e \in B} r_e$ for $B \subset E$.
By some basic calculus, and the fact that $\log(1+x) \ge x / (1 + x/2)$, we see that
\[
\left( \frac{e^\delta}{(1 + \delta)^{1 + \delta}} \right)^{r_B} \le \exp \left(- \frac{\delta^2 r_B}{2 + \delta}  \right)
\]
Rewriting the Lemma \ref{lem:UST_conc}, we obtain with probability $>1-\gamma$
\[
|\Tcal \cap B| \le r_B + \sqrt{2 r_B \log \frac 1 \gamma + \frac 1 4 (\log \frac 1 \gamma)^2} + \frac{1}{2} \log \frac 1\gamma \le \left(r_B + \sqrt{2 r_B \log \frac 1 \gamma} + \log \frac 1\gamma \right) 
\]
Now, because $\|\nabla_\Tcal \xb \|_0 = |\Tcal \cap B|$ for $B = \supp(\nabla_\Tcal \xb)$, we know by Theorem \ref{thm:detection_bd} if
\[
\frac \mu \sigma = \omega \left( \sqrt{ \left(r_B + \sqrt{2 r_B \log \frac 1 \gamma} + \log \frac 1\gamma \right) \log d} \log n \right)
\]
then $H_0$ and $H_1$ are asymptotically distinguished and the result follows because we guarantee this uniformly for all such $B$.
\end{proof}


\section{Specific Graph Models}
\label{sec:examples}
In this section we study our detection problem for several different families of
graphs. Specifically, we control the effective resistance $r_e$for each graph
family, which when combined with Theorem~\ref{thm:Rperf} gives a lower bound
on the SNR for which $||{\bf By}||_\infty$ asymptotically distinguishes $H_0$
and $H_1$. 

In Theorem~\ref{thm:Rperf}, we showed that the consistency regime depends on the
effective resistances of the cuts induced by the class of signals
$\mathcal{X}$. On its own, it is not immediately clear that this result is an
improvement over the bound in Remark~\ref{rem:simple_bound} that we would obtain
from any spanning tree. However, Foster's theorem highlights why we expect the
effective resistance to be less than the cut size.

\begin{theorem}[Foster's Theorem \cite{foster1949average,tetali1991random}]
\[
\sum_{e \in E(G)} r_e = n - 1
\]
\end{theorem}

Hence, if we select an edge uniformly at random from the graph, we expect its
effective resistance to be $(n-1)/m \approx \bar{d}^{-1}$ (the reciprocal of the
average degree) where $m \triangleq |E(g)|$. Indeed, in several example graphs
we can formalize this intuition and give an improvement over
Remark~\ref{rem:simple_bound}.

We complement these results with two types of simulations verifying different
aspects of our theory. The first verifies the upper bound in
Lemma~\ref{lem:tree_cut_bound} for a variety of graph models by plotting $||{\bf
  B x}||_0$ versus $\rho \log(d) \log (n)$ for several randomly generated
signals. These plots (see Figure~\ref{fig:basis_sparsity}) demonstrate the
validity of our bound since in all cases $||{\bf Bx}||_0 \le \rho \log (d) \log
(n)$, but, more importantly, the readily-observable linear relationship between
these two quantities suggests that one should not expect an improvement on this
bound by more than a constant factor.

The second simulation verifies the performance of our spanning tree wavelets
detector on various graph models. In Figure~\ref{fig:power_curves}, we plot the
power of our test statistic (with Type I error fixed at 5\%) as a function of
signal strength $\mu$ for several values of $n$, where we allow $\rho$ to scale
with $n$ to ensure a non-empty $\mathcal{X}$. These simulations demonstrate that
as expected for sufficiently large signal strength, our statistic can separate
$H_0$ from $H_1$. More importantly, the threshold signal strength for which
detection is possible increases with $n$ and $\rho$, as predicted by our
theory.

\begin{figure}
\mbox{
\subfigure{\includegraphics[scale=0.23]{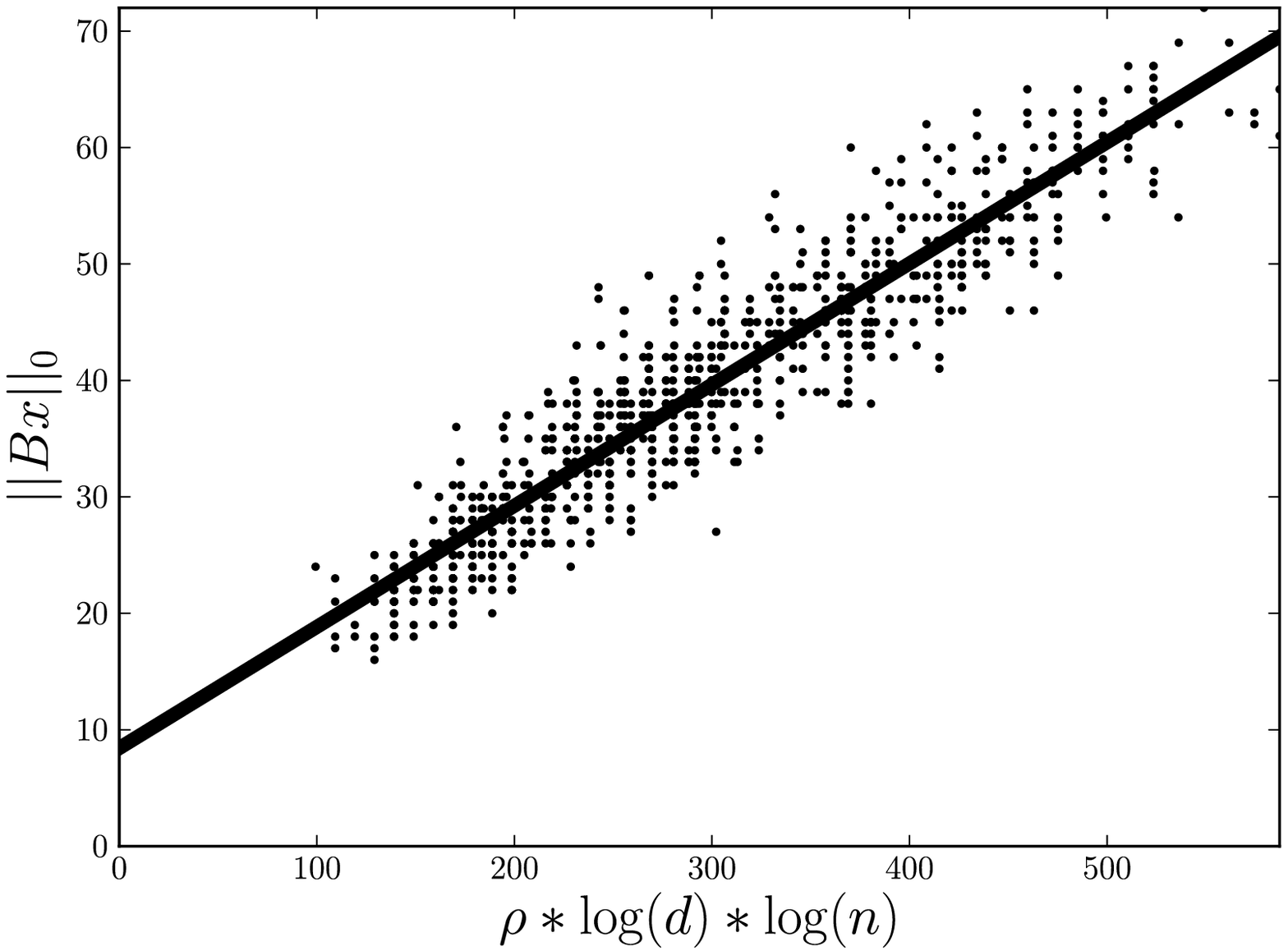}}\hspace{-0.4cm}
\subfigure{\includegraphics[scale=0.23]{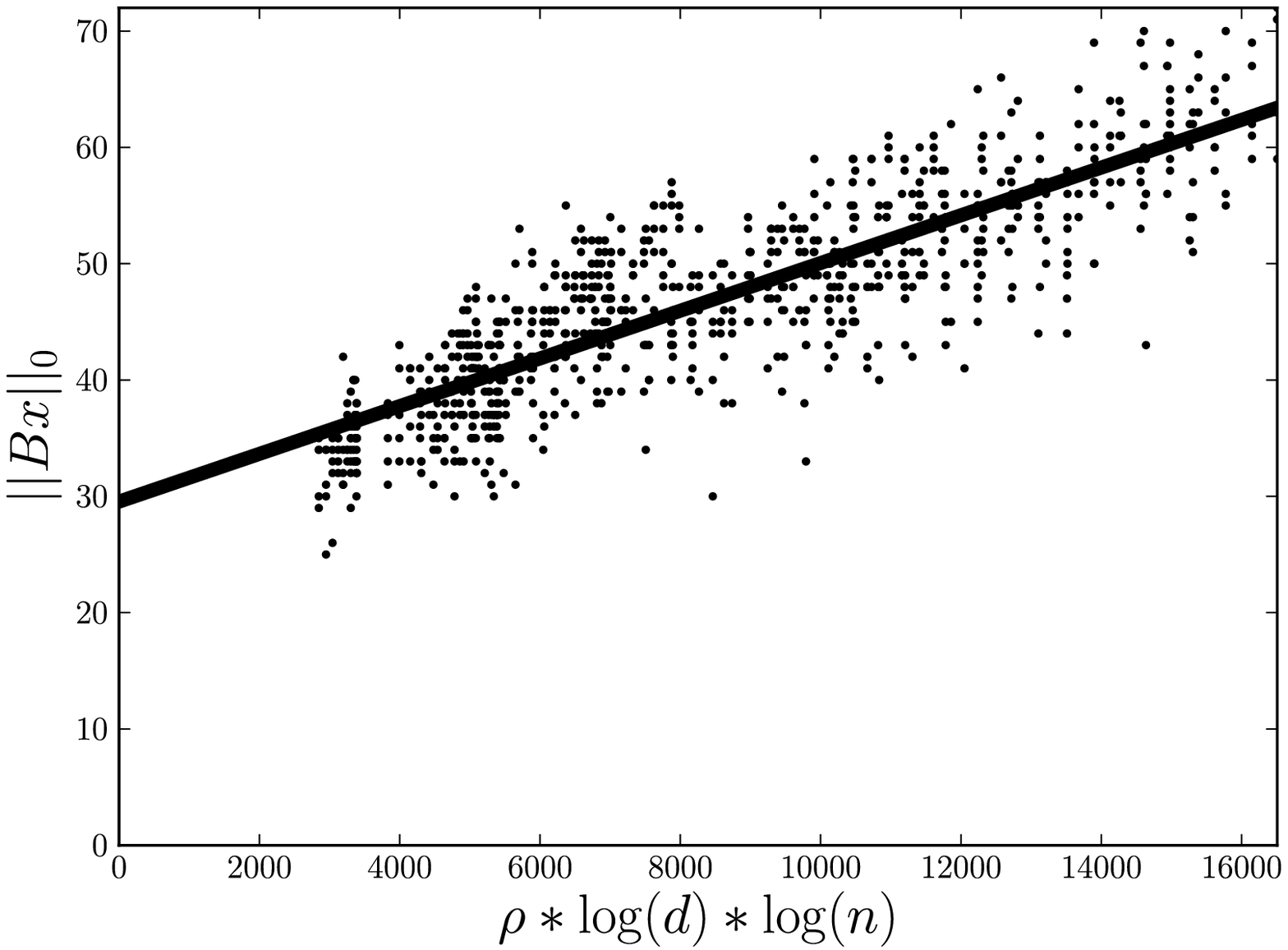}}\hspace{-0.4cm}
\subfigure{\includegraphics[scale=0.23]{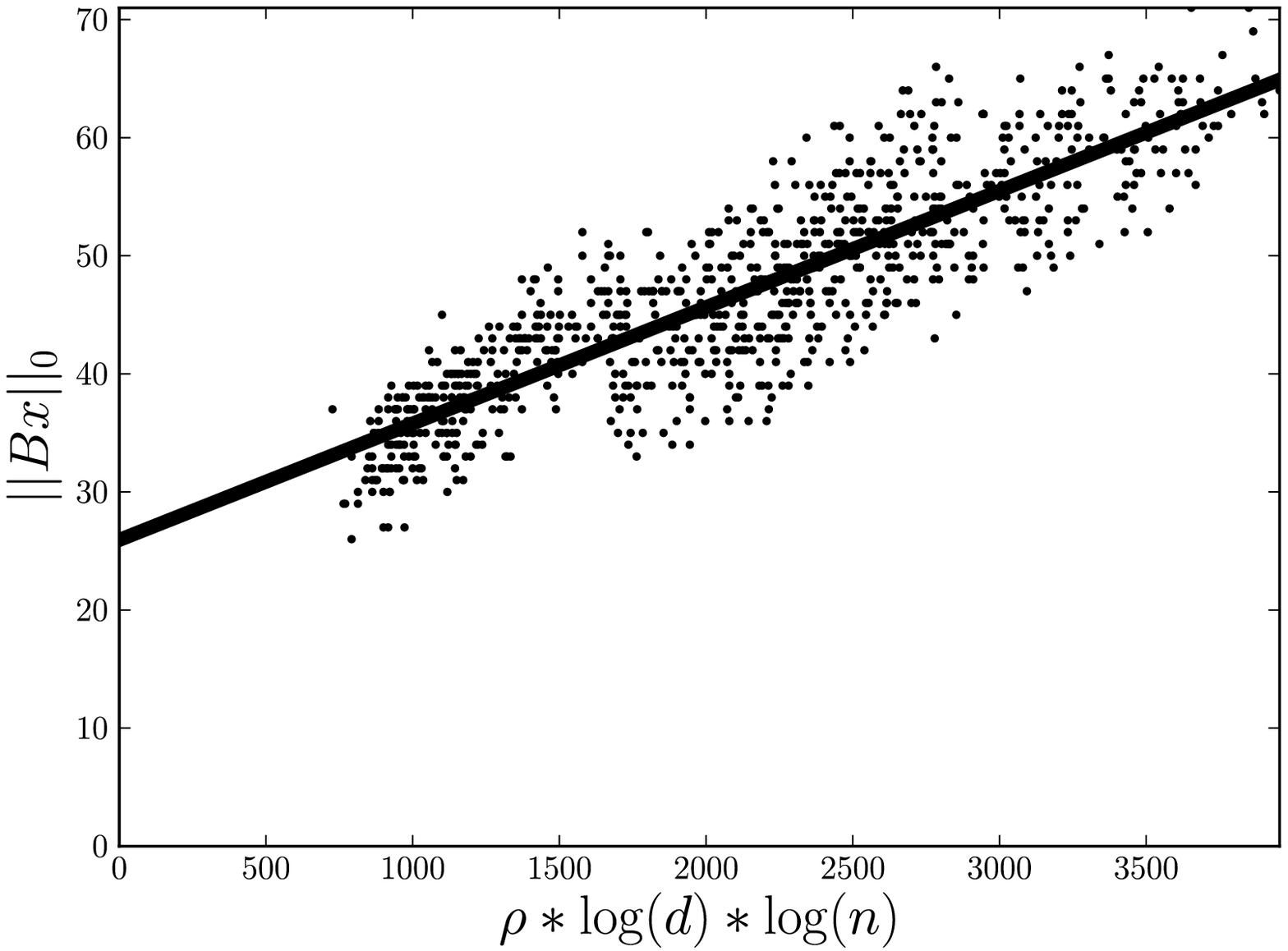}}\hspace{-0.4cm}
\subfigure{\includegraphics[scale=0.23]{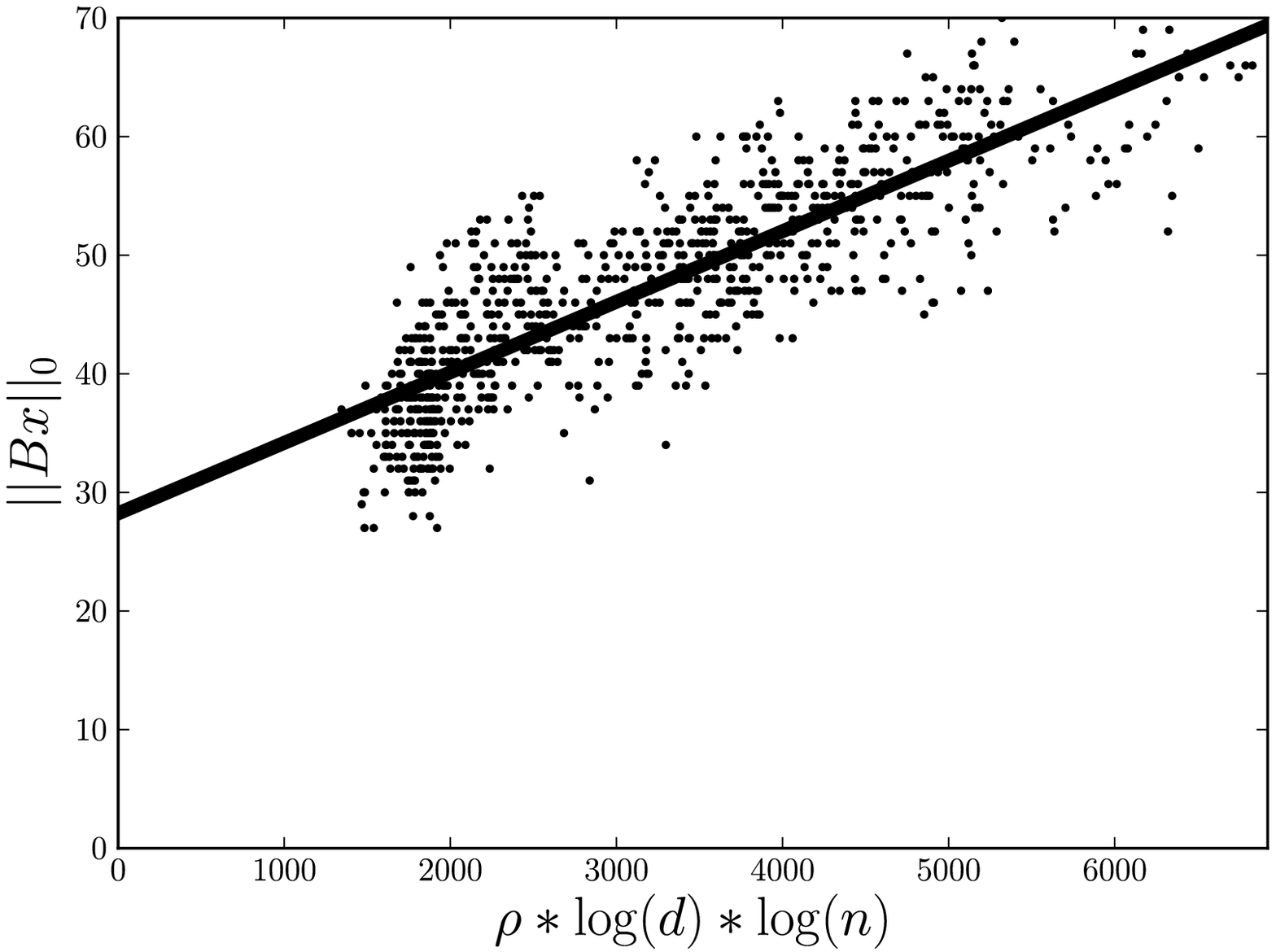}}
}
\caption{Spanning tree wavelet basis sparsity as a function of $\rho \log d \log
  n$ for, from left to right, 2-dimensional torus, complete, $k$-NN, and
  $\epsilon$ graphs. Linear fits have slopes: $0.10$, $0.0021$, $0.010$,
  $0.0059$ and $R^2$ coefficients: $0.88$ $0.72$, $0.76$, $0.71$ respectively.}
\label{fig:basis_sparsity}
\end{figure}

\begin{figure}
\mbox{
\subfigure{\includegraphics[scale=0.23]{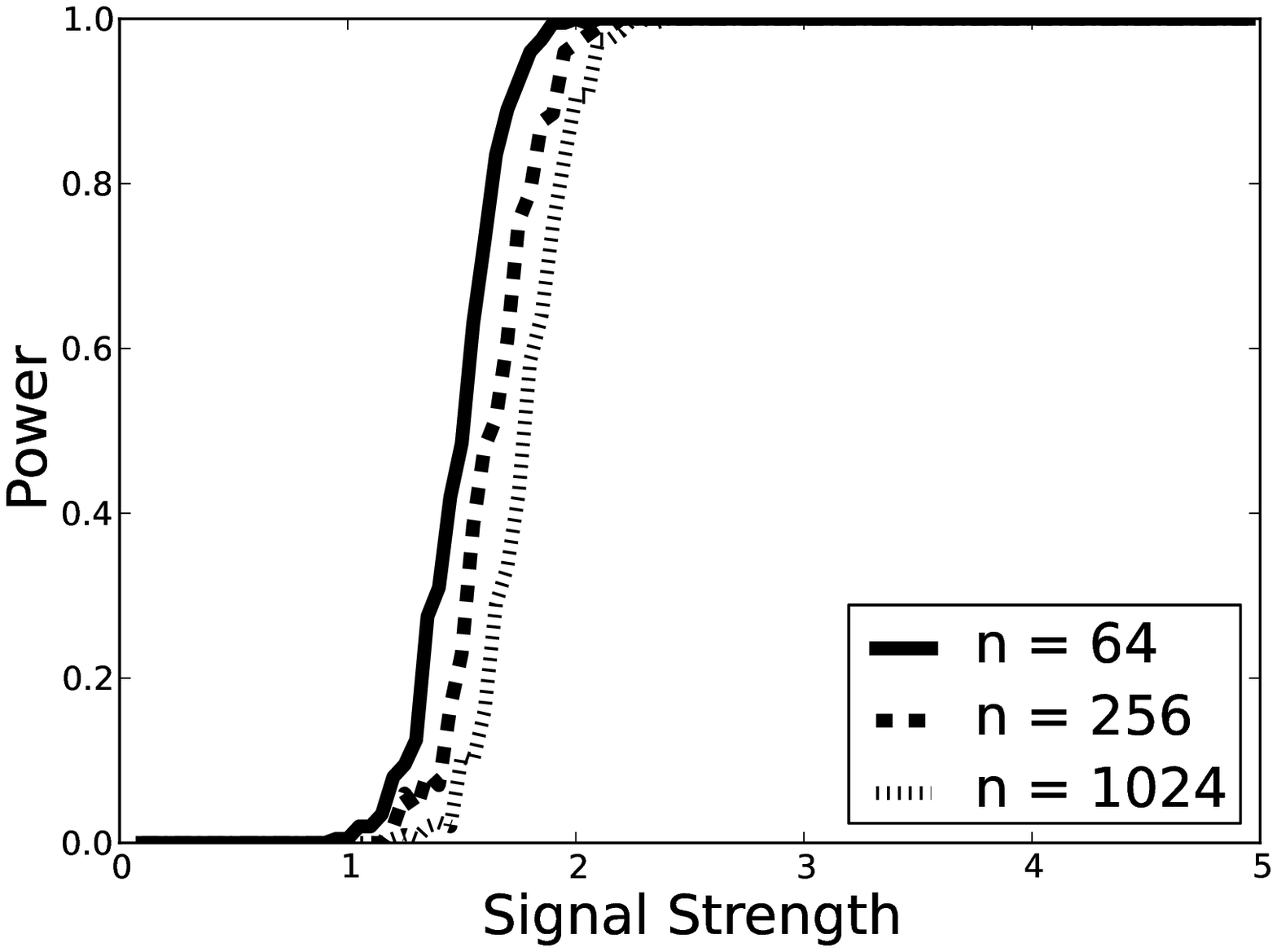}}\hspace{-0.4cm}
\subfigure{\includegraphics[scale=0.23]{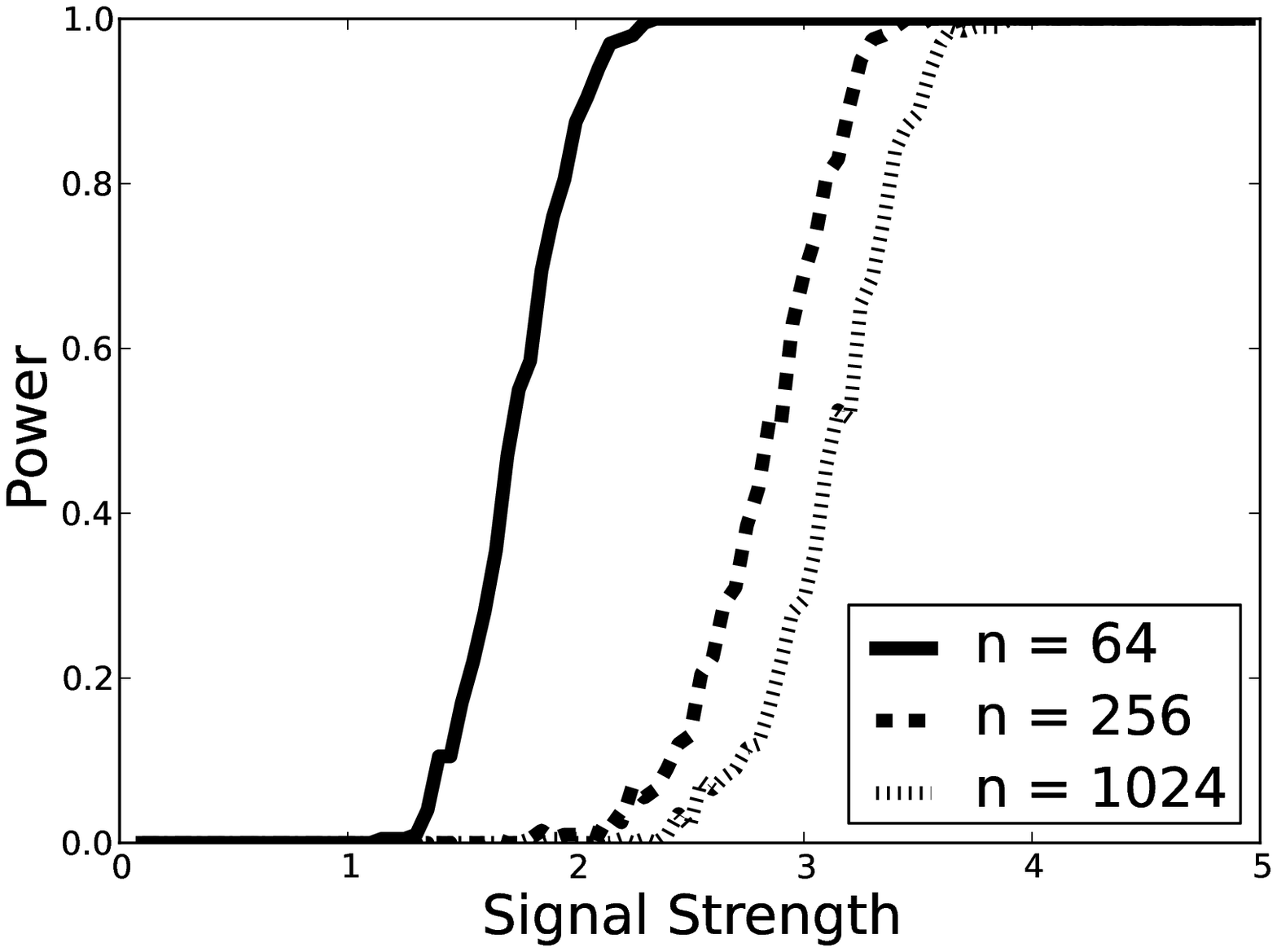}}\hspace{-0.4cm}
\subfigure{\includegraphics[scale=0.23]{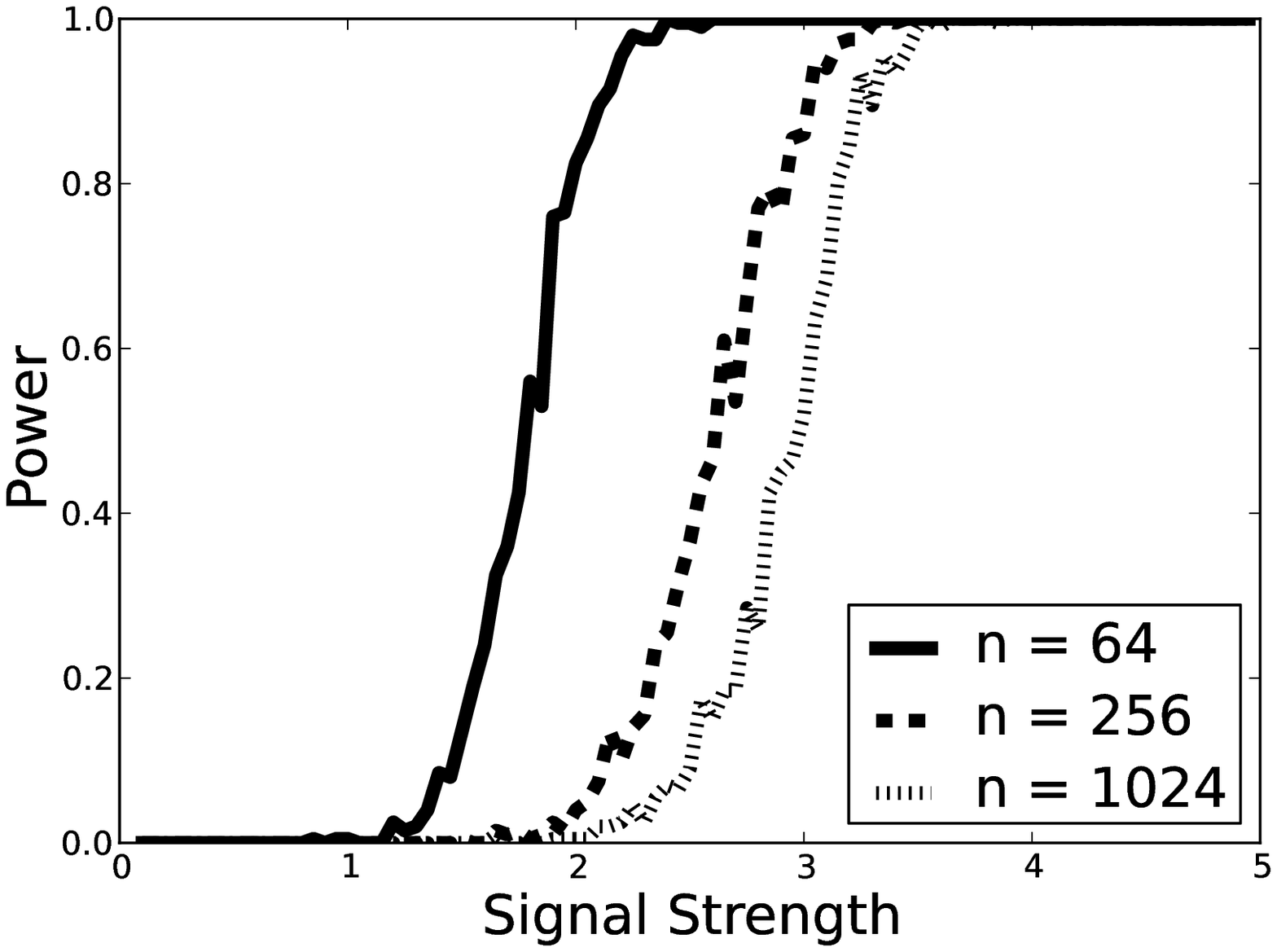}}\hspace{-0.4cm}
\subfigure{\includegraphics[scale=0.23]{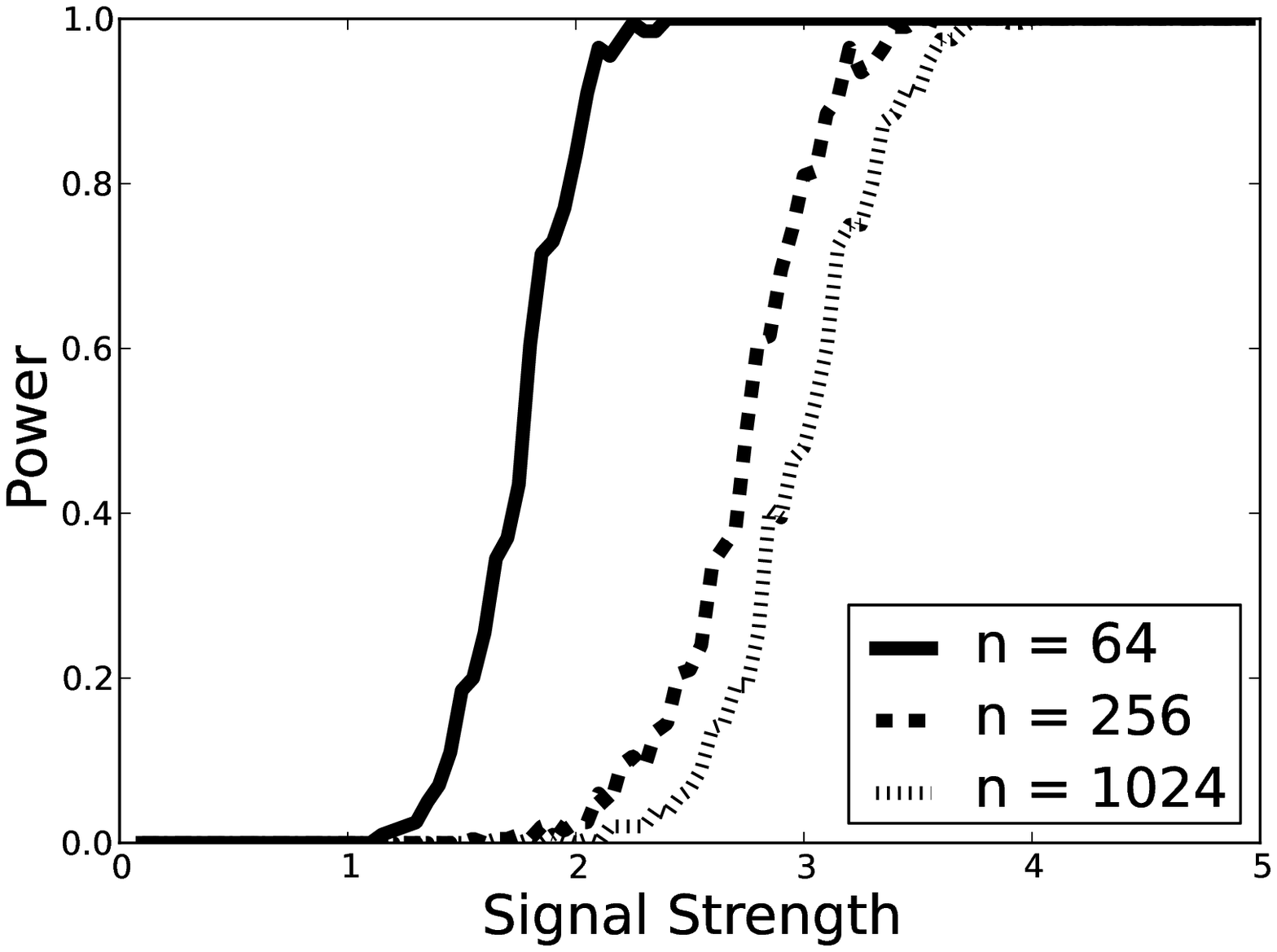}}
}
\caption{Power as a function of signal strength for different values of $n$ for
  2-dimensional torus, complete, $k$-NN, and $\epsilon$ graphs. $\rho$ scales
  like $\sqrt{n}$, $n$, $n^{2/3}$ and $n^{4/5}$ respectively.}
\label{fig:power_curves}
\end{figure}



\subsection{Edge Transitive Graphs}
An edge transitive graph, $G$, is one such that for any edges $e_0, e_1$, there
is a graph automorphism that maps $e_0$ to $e_1$. Examples of edge transitive
graphs include the $l$-dimensional torus and the complete graph $K_n$. For such
a graph, every edge has the same effective resistance, and Foster's Theorem then
shows that $r_e = (n-1)/m$ where $m$ is the number of edges. Moreover since edge
transitive graphs must be $d$-regular for some degree $d$, we see that $m =
\Theta(nd)$ so the $r_e = \Theta(1/d)$. This leads us to the following
corollary, which we note matches the lower bound in
Theorem~\ref{thm:lower_bound} modulo logarithmic terms if $\rho/d \le \sqrt{n}$:

\begin{corollary}
Let $G$ be edge transitive with common degree $d$.  Then for each edge $e \in E(G)$, $r_e = (n-1)/m$.
Consider the hypothesis testing problem
\eqref{eq:main_problem} where the set $\mathcal{X}$ is parameterized by
$\rho$. If: \[
\frac \mu \sigma = \omega \left( \sqrt{\frac \rho d \log d} \log n \right)
\]
Then the UST wavelet detector, $||{\bf By}||_{\infty}$, asymptotically
distinguishes $H_0$ and $H_1$.
\label{cor:edge_transitive_bd}
\end{corollary}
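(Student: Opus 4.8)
The plan is to obtain this corollary as a direct specialization of Theorem~\ref{thm:Rperf}. That theorem asserts asymptotic distinguishability whenever $\mu/\sigma = \omega(\sqrt{r_{\max} \log d}\,\log n)$, where $r_{\max} = \max_{\xb \in \Xcal} \sum_{e \in \supp(\nabla \xb)} r_e$. Hence the entire task reduces to controlling $r_{\max}$ for an edge transitive graph and verifying that it is $\Theta(\rho/d)$; once this is done, the hypothesis of Theorem~\ref{thm:Rperf} collapses exactly onto the stated condition on $\mu/\sigma$.

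First I would pin down the common edge effective resistance. Effective resistance is invariant under graph automorphisms: since $r_{v,w} = (\delta_v - \delta_w)^\top \Delta^\dagger (\delta_v - \delta_w)$ and any automorphism induces a permutation matrix $P$ with $P\Delta = \Delta P$ (and hence $P\Delta^\dagger = \Delta^\dagger P$, as $P$ fixes the all-ones vector spanning the null space of $\Delta$), edge transitivity forces $r_e$ to take a single common value over all $e \in E(G)$. Summing this common value over all edges and invoking Foster's Theorem, $\sum_e r_e = n-1$, yields $r_e = (n-1)/m$ for every edge. Because an edge transitive graph is $d$-regular, $2m = nd$, so $r_e = 2(n-1)/(nd) = \Theta(1/d)$.

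Next I would bound the cut resistance uniformly over $\Xcal$. Every $\xb \in \Xcal$ satisfies $\norm{\nabla \xb}_0 \le \rho$, so at most $\rho$ edges are active and
\[
\sum_{e \in \supp(\nabla \xb)} r_e \le \rho\,\frac{n-1}{m} = \Theta(\rho/d),
\]
whence $r_{\max} = \Theta(\rho/d)$. Substituting this into Theorem~\ref{thm:Rperf} turns its sufficient condition into $\mu/\sigma = \omega(\sqrt{(\rho/d)\log d}\,\log n)$, which is precisely the hypothesis of the corollary, and distinguishability follows.

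The computation is routine, so I expect no serious obstacle: the genuine content resides in Theorem~\ref{thm:Rperf} and in Foster's Theorem, both of which I may invoke directly. The one step warranting care is the transitivity argument equating the effective resistances of all edges, where one must use automorphism invariance of $\Delta^\dagger$ (not merely of $\Delta$); this is standard and follows immediately from the commutation $P\Delta = \Delta P \Rightarrow P\Delta^\dagger = \Delta^\dagger P$ for such a permutation $P$.
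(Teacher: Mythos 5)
Your proposal is correct and follows essentially the same route as the paper: automorphism invariance of effective resistance plus Foster's Theorem gives $r_e = (n-1)/m = \Theta(1/d)$ under the assumed regularity, whence $r_{\max} = O(\rho/d)$, and Theorem~\ref{thm:Rperf} finishes the argument. Your explicit verification that $P\Delta^\dagger = \Delta^\dagger P$ for an automorphism-induced permutation $P$ is a welcome bit of added rigor that the paper leaves implicit, but it does not change the structure of the proof.
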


\subsection{kNN Graphs}
Oftentimes in applications, the graph topology is derived from data.  In this
case, the randomness of the data means that the graph itself is inherently
random.  Commonly, these graphs are modeled as random geometric graphs, and in
this section we will devote our attention to the {\em symmetric $k$-nearest
  neighbor graphs.}  Specifically, suppose that $\zb_1,...,\zb_n$ are drawn
i.i.d.~from a density $p$ supported over $\RR^d$.  Then we form the graph $G$
over $[n]$ by connecting vertices $i,j$ if $\zb_i$ is amongst the $k$-nearest
neighbors of $\zb_j$ or vice versa.  Some regularity conditions of $p$ are
needed for our results to hold; they can be found in \cite{vonluxburg2010}.

To bound the effective resistance $r_e$, Corollary 9 in \cite{vonluxburg2010}
shows that $H_{ij}/2m \rightarrow 1/d_j$ and by the definition of $r_e$ we see
that $r_{ij} \rightarrow \frac{1}{d_i} + \frac{1}{d_j} \le \frac{2}{k}$, since
$d_i \ge k$ for each $i$. A formal analysis leads to the following corollary,
which we prove in Appendix~\ref{app:examples} with more precise concentration arguments:
\begin{corollary}
Let $G$ be a $k$-NN graph with $k/n \rightarrow 0$ and $k(k/n)^{2/d} \rightarrow
\infty$ and where the density $p$ satisfies the regularity conditions
in~\cite{vonluxburg2010}. Consider the hypothesis testing problem
\eqref{eq:main_problem} where the set $\mathcal{X}$ is parameterized by
$\rho$. If:
\[
\frac{\mu}{\sigma} = \omega(\sqrt{\rho/k \log d} \log n)
\]
Then the UST wavelet detector, $||{\bf By}||_\infty$, asymptotically distinguishes $H_0$ and $H_1$.
\label{cor:knn_rate}
\end{corollary}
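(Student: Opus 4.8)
The plan is to reduce the corollary to Theorem~\ref{thm:Rperf} by controlling the quantity $r_{\max} = \max_{\xb \in \Xcal} \sum_{e \in \supp(\nabla \xb)} r_e$ for the random $k$-NN graph. Since every $\xb \in \Xcal$ has a cut of at most $\rho$ edges, we immediately have the crude bound $r_{\max} \le \rho \max_{e \in E} r_e$, so the whole argument reduces to a uniform upper bound on the per-edge effective resistance. If we can show that with probability tending to one $\max_{e} r_e \le C/k$ for some constant $C$, then $r_{\max} \le C\rho/k$, and substituting into Theorem~\ref{thm:Rperf} yields exactly the stated rate $\mu/\sigma = \omega(\sqrt{\rho/k \log d}\,\log n)$.

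To bound $\max_e r_e$, I would use the random-walk characterization $r_{ij} = (H_{ij} + H_{ji})/(2m)$ together with the hitting-time approximation of \cite{vonluxburg2010} (their Corollary~9), which under the stated regularity conditions gives $H_{ij}/(2m) \to 1/d_j$. Combined with the symmetric $k$-NN degree lower bound $d_i \ge k$, this produces the heuristic $r_{ij} \to 1/d_i + 1/d_j \le 2/k$. The two hypotheses $k/n \to 0$ and $k(k/n)^{2/d} \to \infty$ are precisely the connectivity and convergence conditions under which this approximation is valid, so they enter here rather than in the detection argument itself.

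The main obstacle, and the reason a more precise concentration argument is required, is that the von Luxburg limit is an asymptotic (in-probability) statement for a single pair, whereas we need a deviation bound that holds \emph{uniformly} over all $\Theta(nk)$ edges of the graph with probability $\to 1$. I would therefore (i) establish a quantitative version of the hitting-time convergence carrying an explicit tail, (ii) control the random degrees, which in a symmetric $k$-NN graph satisfy $d_i \ge k$ but may exceed $k$, via a geometric packing argument showing $\max_i d_i = O(k)$ so that $\log d = O(\log k)$, and (iii) union bound over edges using the tail from (i). Care is needed because the geometric positions $\zb_i$ are random and the events for different edges are dependent, so the union bound must be paired with the regularity conditions on $p$ to keep the failure probability $o(1)$.

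Finally, I would assemble the pieces by decoupling the randomness. Let $\mathcal{A}$ be the good event $\{r_{\max} \le C\rho/k\}$ on the draw of $G$; the previous step shows $\PP(\mathcal{A}) \to 1$. Conditioned on $\mathcal{A}$, the SNR hypothesis implies the condition of Theorem~\ref{thm:Rperf}, so the conditional risk of the UST wavelet detector tends to zero, while on $\mathcal{A}^c$ the risk is trivially bounded. Taking expectations over the graph draw gives $R \le \PP(\mathcal{A}^c) + o(1) \to 0$, which is the claimed asymptotic distinguishability.
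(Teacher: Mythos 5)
Your proposal is correct and follows essentially the same route as the paper: bound $r_{\max} \le \rho \max_e r_e = O(\rho/k)$ using the random-walk characterization of effective resistance, von Luxburg's hitting-time bounds, and the symmetric $k$-NN degree bound $d_i \ge k$, then plug into Theorem~\ref{thm:Rperf}. The only substantive difference is that the uniformity machinery you outline in steps (i)--(iii) is unnecessary: Corollary 9 of \cite{vonluxburg2010}, as invoked in the paper, is already a quantitative deviation bound holding \emph{uniformly} over all pairs $i \ne j$ with probability at least $1 - c_1 n \exp(-c_2 k)$, which tends to one under the assumed scaling of $k$, so no additional union bound or degree upper bound ($\log d$ simply carries through from Theorem~\ref{thm:Rperf} into the stated rate) is needed.
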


\subsection{$\epsilon$-Graphs}
The $\epsilon$-graph is another widely used random geometric graph in machine
learning and statistics. As with the $k$-NN graph, the vertices are embedded
into $\RR^d$ and edges are added between pairs of vertices that are within
distance $\epsilon$ of each other. As with the $k$-NN graph, Corollary 8
from~\cite{vonluxburg2010} shows that $H_{ij} \rightarrow m/d_j$ for each pair
of vertices. This leads us to believe that $r_{ij} \rightarrow1/(d_i) +
1/(d_j)$. If the density $p$ from which we draw data points is bounded from
below by some constant, then we can uniformly lower bound all of the degrees
$d_i$ using fairly elementary concentration results, which results in an upper
bound on $r_{e}$. Formalizing this intuition, we have the following corollary,
which we prove in Appendix~\ref{app:examples}:

\begin{corollary}
Let $G$ be a $\epsilon$-graph with points $X_1, \ldots X_n$ drawn from a density
$p$, which satisfies the regularity conditions in~\cite{vonluxburg2010} and is
lower bounded by some constant $p_{\min}$ (independent of $n$). Let $\epsilon
\rightarrow 0, n\epsilon^{d+2} \rightarrow \infty$ and consider the hypothesis
testing problem~\eqref{eq:main_problem} where the set $\mathcal{X}$ is parameterized by
$\rho$. If:
\[
\frac{\mu}{\sigma} = \omega(\sqrt{\frac{\rho}{n\epsilon^{d}} \log d} \log n)
\]
Then $||{\bf By}||_\infty$ asymptotically distinguishes $H_0$ and $H_1$.
\label{cor:epsilon_rate}
\end{corollary}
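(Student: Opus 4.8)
The plan is to reduce everything to a uniform effective-resistance bound and then invoke Theorem~\ref{thm:Rperf}. Since every $\xb \in \Xcal$ has cut size $\norm{\nabla\xb}_0 \le \rho$, we have the crude but sufficient bound
\[
r_{\max} = \max_{\xb \in \Xcal} \sum_{e \in \supp(\nabla\xb)} r_e \le \rho \cdot \max_{e \in E(G)} r_e .
\]
Thus it suffices to show that, with probability tending to one over the draw of the random geometric graph, $\max_{e} r_e = O(1/(n\epsilon^d))$; substituting this into Theorem~\ref{thm:Rperf} turns its hypothesis into exactly the stated condition $\mu/\sigma = \omega(\sqrt{(\rho/(n\epsilon^{d}))\log d}\,\log n)$, up to the absorbed constant. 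Because the conclusion of Theorem~\ref{thm:Rperf} concerns a \emph{fixed} graph (with randomness only in the UST and the noise), I would condition on the high-probability event $\mathcal{G}$ that all effective resistances are controlled, and bound the overall risk by $\PP\{\mathcal{G}^c\} + \sup_{G \in \mathcal{G}} R_G(\norm{\Bb\yb}_\infty) \to 0$.

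To control $\max_e r_e$ I would use the random-walk characterization $r_{ij} = (H_{ij} + H_{ji})/(2m)$ together with Corollary~8 of~\cite{vonluxburg2010}, which gives $H_{ij} \to m/d_j$ for the $\epsilon$-graph and hence $r_{ij} \le (1+o(1))(1/d_i + 1/d_j)$. This reduces the effective-resistance bound to a uniform lower bound on the vertex degrees: if $\min_i d_i = \Omega(n\epsilon^d)$, then $\max_e r_e = O(1/(n\epsilon^d))$, as required. The degree bound is the routine part. Each degree $d_i = \sum_{j \ne i} \mathbf{1}\{\norm{\zb_i - \zb_j} \le \epsilon\}$ is, conditionally on $\zb_i$, a sum of $n-1$ i.i.d.\ Bernoulli variables with mean $(n-1)\int_{B(\zb_i,\epsilon)} p(\zb)\,d\zb \ge c\, p_{\min}\, n\epsilon^d$, using $p \ge p_{\min}$ and the fact that the volume of an $\epsilon$-ball scales as $\epsilon^d$ (the regularity conditions of~\cite{vonluxburg2010} controlling the boundary vertices). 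A multiplicative Chernoff bound gives $\PP\{d_i < \tfrac12 \EE d_i\} \le \exp(-c' n\epsilon^d)$, and a union bound over the $n$ vertices succeeds since $n\epsilon^d \to \infty$ (implied by $n\epsilon^{d+2} \to \infty$), so $\exp(\log n - c' n\epsilon^d) \to 0$.

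The main obstacle is not the degree concentration but upgrading the \emph{asymptotic} hitting-time statement of~\cite{vonluxburg2010} into a genuine uniform, finite-sample \emph{upper} bound on every $r_e$ with an explicit rate: Corollary~8 is a convergence result, whereas Theorem~\ref{thm:Rperf} requires $\max_e r_e$ to be bounded for the realized graph. I would therefore track the error terms in the hitting-time expansion and show they are $o(1/d_j)$ uniformly over edges, which is precisely where the stronger scaling $n\epsilon^{d+2} \to \infty$ (rather than merely $n\epsilon^{d} \to \infty$) is consumed — it guarantees the graph Laplacian is close enough to its continuum limit that the effective resistance is governed by the two local degrees, with geometric and boundary fluctuations negligible. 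Once this uniform bound is established on the event $\mathcal{G}$, the corollary follows immediately from the reduction in the first paragraph.
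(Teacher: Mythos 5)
Your proposal is correct and follows essentially the same route as the paper: bound $r_{\max} \le \rho \max_{e} r_e$, combine Corollary 8 of von Luxburg et al.\ with a Chernoff-plus-union-bound degree estimate $\min_i d_i = \Omega(n\epsilon^d)$ to get $\max_e r_e = O(1/(n\epsilon^d))$ with probability tending to one, and plug this into Theorem~\ref{thm:Rperf}. The one step you flag as the ``main obstacle'' is in fact not one: the cited Corollary 8 is already a finite-sample, uniform bound with explicit error term $c_5/(n\epsilon^{d+1})$ and explicit failure probability, and the error-tracking you propose --- showing the correction is $o(1/d_j)$ uniformly, which is exactly where $n\epsilon^{d+2}\to\infty$ is consumed --- is precisely the short manipulation the paper performs.
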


\section{Discussion}

We studied the detection of piece-wise constant activation patterns over graphs, and provided a necessary condition for the asymptotic distinguishability of signals that are assumed to have few discontinuities.
We gave a novel spanning tree wavelet construction, that is the extension of the Haar wavelet basis, for arbitrary graphs.
While it achieves strong theoretical performance for detection, the spanning tree wavelet construction could be of independent interest for compression and denoising.
The uniform spanning tree wavelet detector was shown to have strong theoretical guarantees that in many cases gives us near optimal performance.
This means that under adversarial choice of signal, our randomized algorithm asymptotically distinguishes $H_0$ from $H_1$.
Alternatively, this means that for any given signal (non-adversarial setting) that the vast majority of spanning trees induce detectors that asymptotically distinguish $H_0$ from $H_1$ for low signal-to-noise ratios.


\section*{Acknowledgements} This research is supported in part by AFOSR under grant FA9550-10-1-0382 and NSF under grant IIS-1116458.
AK is supported in part by an NSF Graduate Research Fellowship.


\bibliographystyle{plainnat}
\bibliography{biblio}
\vfill

\pagebreak
\appendix
\section{Proofs for Section~\ref{sec:wavelets}}
\subsection{Proof of Lemma~\ref{lem:tree_cut_bound}}
Before we proceed with the proof, we state and prove two results on the
performance of the algorithm:
\begin{lemma}
\label{lem:findbalance}
Let $\Tcal$ be a tree.  {\em FindBalance} returns a vertex $v$ such that the
largest connected component of $\Tcal \backslash v$ is of size at most $\lceil
|\Tcal|/2 \rceil$ in $O(|\Tcal|)$ time.
\end{lemma}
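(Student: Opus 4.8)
The plan is to track a single potential function and show that \emph{FindBalance} performs a monotone descent on it toward a centroid of the tree. For a vertex $v$ I would let $f(v)$ denote the size of the largest connected component of $\Tcal \backslash v$; the lemma asks precisely that the returned vertex satisfy $f(v) \le \lceil |\Tcal|/2 \rceil$. It is convenient to rewrite $f$ in terms of edge cuts: for a neighbor $u$ of $v$, let $a_{v \to u}$ be the number of vertices on the $u$-side of the edge $(v,u)$, so the components of $\Tcal \backslash v$ have exactly the sizes $\{a_{v \to u}\}_{u \sim v}$ and $f(v) = \max_{u \sim v} a_{v \to u}$. The neighbor $w$ selected by the algorithm is the one attaining this maximum, and the walk moves to $w$ exactly when the stopping test $f(w) > f(v)$ fails.

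The heart of the proof is the following claim, which I would isolate first: if $f(v) > |\Tcal|/2$ and $w$ is the neighbor of $v$ inside the largest component $A$ of $\Tcal \backslash v$, then $f(w) < f(v)$. Granting this, correctness is immediate by contraposition, since the algorithm returns $v$ only when $f(w) > f(v)$, and the claim rules this out whenever $f(v) > |\Tcal|/2$; hence any returned $v$ has $f(v) \le |\Tcal|/2 \le \lceil |\Tcal|/2\rceil$. To establish the claim, set $n = |\Tcal|$ and observe that $|A| = f(v) > n/2$, while the components of $\Tcal \backslash v$ together hold $n-1$ vertices, forces $A$ to be the unique largest component, so $w$ is well defined. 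Cutting the edge $(v,w)$ splits $\Tcal$ into the $w$-side, which is exactly $A$ of size $f(v)$, and the $v$-side of size $n - f(v) < n/2$. Examining $\Tcal \backslash w$, one component is that $v$-side, of size $n - f(v) < n/2 < f(v)$, while every other component is a subtree hanging off $w$ contained in $A \backslash \{w\}$, hence of size at most $f(v) - 1$. Taking the maximum yields $f(w) < f(v)$.

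It then remains to handle termination and the time bound. The move condition makes $f$ non-increasing along the walk, and the claim makes it strictly decreasing at every step with $f(v) > \lceil |\Tcal|/2\rceil$; since $f$ is a positive integer, the walk reaches a vertex with $f(v) \le \lceil |\Tcal|/2 \rceil$ within $O(|\Tcal|)$ steps. In fact, once $f(v) < n/2$ strictly, the next step fires the stopping rule, because the component of $\Tcal \backslash w$ we just came from has size $n - f(v) > n/2 > f(v)$, forcing $f(w) > f(v)$. For the running time I would precompute all subtree sizes with one depth-first search, after which any $a_{v \to u}$, and thus $f(v)$ with its maximizing neighbor, is evaluable in $O(\deg v)$ time; since the walk is a monotone path toward the centroid it visits each vertex $O(1)$ times, giving total cost $\sum_v \deg(v) = O(|\Tcal|)$.

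I expect the main obstacle to be the termination bookkeeping rather than the geometric content: the cut argument behind the claim is clean, but the literal ``strictly larger'' stopping test can oscillate in the even-$n$ bicentroid case, where two adjacent vertices both have $f = n/2$. The hard part will be arguing that this borderline case is benign — a trivial tie-break (or reading the test as $\ge$) halts the walk while still returning a vertex with $f = n/2 = \lceil n/2\rceil$ — so that the descent argument delivers both correctness and the $O(|\Tcal|)$ bound.
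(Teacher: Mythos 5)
Your proof is correct and follows essentially the same route as the paper's: both arguments run a monotone descent on the objective $f(v)$ (the size of the largest component of $\Tcal \backslash v$), with the same key claim --- whenever $f(v)$ exceeds half the tree, the move into the largest component strictly decreases $f$, hence the algorithm can only halt at a vertex with $f(v) \le \lceil |\Tcal|/2 \rceil$. Where you go beyond the paper: the paper asserts flatly that ``every move in \emph{FindBalance} reduces the objective by at least 1,'' and as you observe this is false in the even-$n$ bicentroid case, where two adjacent vertices both have $f = n/2$, the strict stopping test never fires, and the walk as literally specified oscillates between them forever; your tie-break (or reading the test as $\ge$) is exactly the patch needed, and the paper's own proof silently requires it. Your time analysis is also more complete: the paper only bounds the number of moves by $|\Tcal|$, which by itself gives $O(|\Tcal|^2)$ if component sizes are recomputed at every step, whereas your DFS precomputation of subtree sizes, the $O(\deg v)$ evaluation of $f(v)$, and the no-revisit property of the strictly decreasing walk are what actually deliver the claimed $O(|\Tcal|)$ bound.
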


\begin{proof}
Let the objective be the size of the largest connected components of $\Tcal
\backslash v$.  Every move in {\em FindBalance} reduces the objective by at
least $1$ and the objective can be at most $|\Tcal| - 1$ so it must terminate in
less than $|\Tcal|$ moves.  Now at any step of {\em FindBalance}, if the
objective is greater than $\lceil |\Tcal|/2 \rceil$, the cumulative size of the
remaining connected components is less than $\lfloor |\Tcal|/2 \rfloor$.  Hence,
in the next step the connected component formed by these is less than $\lceil
|\Tcal|/2 \rceil$.  Thus, the program cannot terminate at a move directly after
the objective is greater than $\lceil |\Tcal|/2 \rceil$. 
\end{proof}

We will also require the following claim.  Indeed, controlling the depth of the
recursion in the wavelet construction is the sine qua non for controlling the
sparsity, $\| \Bb x \|_0$.
\begin{claim}
The wavelet construction has recursion depth at most $\lceil \log d \rceil
\lceil \log n \rceil$.
\end{claim}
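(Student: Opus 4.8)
The plan is to decompose the recursion depth into the contributions of the two nested routines that make up the construction---the outer {\em FindBalance}-driven recursion that shrinks the working subtree, and the inner {\em FormWavelets} recursion that splits the components around a balancing vertex---bound each separately, and then argue that they compose multiplicatively.

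First I would bound the depth of the outer recursion. By Lemma~\ref{lem:findbalance}, whenever {\em FindBalance} is applied to a subtree $\Tcal'$, the balancing vertex $v$ it returns has the property that every connected component of $\Tcal' \setminus v$ has size at most $\lceil |\Tcal'|/2 \rceil$. The main routine then recurses on the subtrees $\Tcal'_1,\ldots,\Tcal'_{d_v}$ obtained from these components after adding $v$ to the smallest one. Here one small point needs checking: reassigning $v$ must not break the halving bound. This holds because for $|\Tcal'| \ge 3$ the returned vertex is not a leaf, so $d_v \ge 2$ and the smallest component has size at most $\lceil |\Tcal'|/2 \rceil - 1$; adding $v$ leaves it at most $\lceil |\Tcal'|/2 \rceil$. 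Consequently every child subtree has size at most $\lceil |\Tcal'|/2 \rceil$, so the working size at least halves along every root-to-leaf path and the number of outer levels is at most $\lceil \log n \rceil$.

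Next I would bound the depth of the inner {\em FormWavelets} recursion invoked at a single outer call. {\em FormWavelets} treats the $d_v \le d$ components $c_1,\ldots,c_{d_v}$ as a chain and at each step partitions the active block into two halves $C_1$ and $C_2$ before recursing on each half. Since the number of components in the active block is at least halved at each step, this binary recursion terminates after at most $\lceil \log d_v \rceil \le \lceil \log d \rceil$ levels.

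Finally I would compose the two estimates. Reading the whole construction as a single hierarchical partition of $V$, each of the at most $\lceil \log n \rceil$ size-halving outer stages refines the partition through a {\em FormWavelets} sub-recursion of at most $\lceil \log d \rceil$ levels before the outer recursion descends again; stacking these refinement stages yields a total recursion depth of at most $\lceil \log d \rceil \lceil \log n \rceil$. The step I expect to be the main obstacle is exactly this composition: one must verify that the two recursions combine multiplicatively rather than additively, i.e.\ that every one of the $\lceil \log n \rceil$ halving stages carries its own full $\lceil \log d \rceil$-deep {\em FormWavelets} refinement, and that the bookkeeping of reassigning $v$ to the smallest component never violates the guarantee of Lemma~\ref{lem:findbalance}. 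This is also the quantity that feeds Lemma~\ref{lem:tree_cut_bound}: a fixed tree edge lies in at most $\lceil \log n \rceil$ of the nested working subtrees and is activated by at most $\lceil \log d \rceil$ wavelets within each, so a union bound over the $\lceil \log d \rceil \lceil \log n \rceil$ levels controls $\| \Bb \xb \|_0$.
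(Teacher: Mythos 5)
Your proposal is correct and takes essentially the same approach as the paper's proof: bound the inner \emph{FormWavelets} recursion by $\lceil \log d \rceil$ via halving the number of components, bound the outer recursion by $\lceil \log n \rceil$ via the \emph{FindBalance} guarantee of Lemma~\ref{lem:findbalance}, and compose the two bounds multiplicatively. Your extra check that reassigning the balancing vertex $v$ to the smallest component preserves the size-halving guarantee is a detail the paper leaves implicit, but it does not change the route of the argument.
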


\begin{proof}
Whenever {\em FormWavelet} is applied it increases the number of activated height of the dendrogram by at most $\lceil \log d \rceil$.
By lemma \ref{lem:findbalance} the size of the remaining components is halved, so the algorithm terminates in at most $\lceil \log n \rceil$ steps.
\end{proof}

\begin{proof}[Proof of Lemma~\ref{lem:tree_cut_bound}]
We will show that any edge $e \in \Tcal$ is activated by at most $\lceil \log d
\rceil \lceil \log n \rceil$ basis elements in $\Bb$, and this will imply the
result. We will say that an edge $e$ is activated by a basis element $\bb$ if $e
\subseteq supp(\nabla_\Tcal \bb)$.  It follows that for a basis element $\bb$,
if $\bb^T\xb \ne 0$ then $\exists e$ that is activated by $b$. Let
$\textrm{activations}(e)$ be the number of basis elements that activates $e$
($\textrm{activations}(e) = 0$ if $e \notin supp(\nabla_{\Tcal} \xb)$). We then
have
\[
||\Bb \xb||_0 \le \sum_{e \in supp(\nabla_{\Tcal} \xb)} \textrm{activations}(e)
\]

Consider some edge $e$. If $e$ is activated by some subtree $\Tcal_{sub}$ (we
use this interchangeably with being activated by the basis element formed by
partitioning $\Tcal_{sub}$ into two groups), then it can be activated by at most
one of $\Tcal_{sub}$'s subtrees. This implies that $\textrm{activations}(e)$ is
upper bounded by the depth of the recursion.
By the claim, we find that,
\[
||\Bb \xb ||_0 \le \sum_{e \in supp(\nabla_\Tcal \xb)} \lceil \log d \rceil \lceil \log n \rceil \le ||\nabla_\Tcal \xb||_0  \lceil \log d \rceil \lceil \log n \rceil
\]
Proving the first claim.
The second claim is obvious from the fact that $\Tcal$ contains a subset of the
edges in $\Gcal$, so every cut has larger cut size in $\Gcal$ than it does in
$\Tcal$.
\end{proof}

\subsection{Proof of Theorem~\ref{thm:detection_bd}}

\begin{proof}
Under the null $\xb = 0$, and we have that 
\[
\norm{\Bb \yb}_\infty = \norm{\Bb \epsilonb}_\infty < \sigma \sqrt{2 \log (n/ \delta)}
\]
with probability at least $1 - \delta$.
So, as long as $\tau = \sigma \sqrt{2 \log (n/ \delta)}$ then we control the probability of false alarm (type 1 error).
For a element $\xb$ of the alternative, let the index, $i^*$, achieve the maximum of $\Bb \xb$ (i.e.~$\norm{\Bb \xb}_\infty = |\Bb \xb|_{i^*}$).
Then $|\Bb \yb|_{i^*} \ge |\Bb x|_{i^*} - \sigma \sqrt{2 \log (1/\delta)}$  with probability at least $1 - \delta$ and
\[
|\Bb x|_{i^*}^2  = \norm{\Bb \xb}^2_\infty \ge \frac{\sum_{i : (\Bb \xb)_i \ne 0} (\Bb \xb)_i^2 }{\norm{\Bb \xb}_0 } = \frac{\norm{\xb}_2^2}{\norm{\Bb \xb}_0}
\]
Taking square roots and combining this with Lemma~\ref{lem:tree_cut_bound}, 
\[
\norm{\Bb \xb}_\infty \ge \frac{\norm{\xb}_2}{\sqrt{\norm{\nabla_\Tcal \xb}_0 \lceil \log d \rceil \lceil \log n \rceil}}
\]
from which we have the result,
\[
\norm{\Bb \yb}_\infty \ge \frac{\norm{\xb}_2}{ \sqrt{\norm{\nabla_\Tcal \xb}_0 \lceil \log d \rceil \lceil \log n \rceil} } -  \sigma \sqrt{2 \log (1/\delta)}
\]
Forcing this lower bound to be greater than $\tau$ gives us our result.
\end{proof}

\section{Proofs For Section~\ref{sec:examples}}
\label{app:examples}
\subsection{Proof of Corollary~\ref{cor:knn_rate}}
First we restate Corollary 9 from~\cite{vonluxburg2010}:
\begin{corollary}
Consider an unweighted symmetric or mutual $k$-NN graph built from a sequence
$X_1, \ldots, X_n$ drawn i.i.d. from a density $p$. Then there exists constants
$c_1, c_2, c_3$ such that with probability at least $1-c_1n\exp(-kc_2)$ we have
uniformly for all $i \ne j$ that:
\[
\left| \frac{k}{2m}H_{ij} - \frac{k}{d_j} \right| \le c_3 \frac{n^{2/d}}{k^{1+2/d}}
\]
\label{cor:luxburg_knn}
\end{corollary}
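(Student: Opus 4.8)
The plan is to reduce the hitting-time statement to the mixing behaviour of the random walk on the $k$-NN graph, and then to supply the two geometric quantities that govern that mixing. The clean algebraic starting point is the fundamental-matrix identity for a reversible chain: with stationary distribution $\pi_j = d_j/(2m)$ and fundamental matrix $Z = (I - P + \mathbf{1}\pi^\top)^{-1}$, one has $H_{ij} = (Z_{jj} - Z_{ij})/\pi_j$, so that
\[
\frac{k}{2m} H_{ij} = \frac{k}{d_j}\left(Z_{jj} - Z_{ij}\right) = \frac{k}{d_j}\left(1 + \sum_{t \ge 1}\left(P^t_{jj} - P^t_{ij}\right)\right).
\]
The virtue of writing it this way is twofold: the asymmetry in $d_j$ appears for free, and the stationary part cancels term-by-term in the difference $Z_{jj}-Z_{ij}=\sum_{t\ge0}(P^t_{jj}-P^t_{ij})$, so no lossy subtraction of a bulk term is needed. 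The $t=0$ term equals $1$ and produces exactly the target $k/d_j$; the entire error budget is therefore the transient sum $\frac{k}{d_j}\sum_{t\ge1}(P^t_{jj}-P^t_{ij})$. (The resistance route via Tetali's formula~\cite{tetali1991random}, $H_{ij} = m\, r_{ij} + \tfrac12\sum_v d_v(r_{jv}-r_{iv})$, gives the same leading term after the $1/d$ contributions telescope, but the transient-sum form makes the cancellation transparent.)

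The size of the transient sum is controlled by the mixing time $T_{\mathrm{mix}} \approx 1/\lambda_2$ of the walk and by the local degree. Before equilibrium the walk started at $j$ accumulates an excess return weight of order $1/d_j$ per step relative to the walk started at $i$, over the $\approx T_{\mathrm{mix}}$ steps of the transient, so $\sum_{t\ge1}(P^t_{jj}-P^t_{ij}) = O(T_{\mathrm{mix}}/d_j)$; multiplying by the prefactor $k/d_j = \Theta(1)$ yields an error of order $T_{\mathrm{mix}}/k$. The key geometric fact is that the random walk on a $k$-NN graph takes steps of Euclidean length $\approx (k/n)^{1/d}$, so traversing the support takes $T_{\mathrm{mix}} \approx (n/k)^{2/d}$ steps and $\lambda_2 \approx (k/n)^{2/d}$. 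Combined with $d_{\min}\ge k$ (immediate from the $k$-NN construction), this gives error $\approx T_{\mathrm{mix}}/k = n^{2/d}/k^{1+2/d}$, exactly the stated rate.

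To make this rigorous I would establish two uniform inputs under the regularity conditions of~\cite{vonluxburg2010}. First, degree concentration: the number of sample points landing in the relevant ball of radius $\approx (k/n)^{1/d}$ is Binomial with mean $\Theta(k)$, so a Chernoff bound gives per-vertex failure probability $\exp(-c_2 k)$, and a union bound over the $n$ vertices delivers both $d_i = \Theta(k)$ uniformly and the probability $1 - c_1 n \exp(-c_2 k)$. Second, the spectral-gap lower bound $\lambda_2 \gtrsim (k/n)^{2/d}$, obtained by comparing the empirical $k$-NN graph Laplacian to the weighted continuum Laplace operator of the density $p$ (whose gap is bounded away from zero on the support) with bandwidth $h \approx (k/n)^{1/d}$ and approximation bias $O(h^2)$.

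The main obstacle is the uniform-over-pairs control of the transient sum with precisely the $n^{2/d}/k^{1+2/d}$ rate. Because the leading $t=0$ term is exactly correct, there is no slack: every bit of error lives in the mixing correction, and a naive per-step bound such as $|P^t_{\cdot j}-\pi_j|\le \sqrt{\pi_j/\pi_\cdot}\,(1-\lambda_2)^t$ summed geometrically overshoots by a factor of order $k$, discarding the fact that the early-time increments are themselves $O(1/d_j)$. Capturing this extra $1/k$ requires genuinely tracking the local-time structure of the walk during the transient phase and converting the continuum spectral gap into a uniform discrete statement simultaneously for all $\binom{n}{2}$ pairs — which is the technical heart of~\cite{vonluxburg2010}, whose concentration and spectral-convergence machinery I would invoke to complete the argument.
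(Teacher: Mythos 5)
This statement is not proved in the paper at all: it is an external result, restated verbatim as Corollary 9 of \cite{vonluxburg2010} (the appendix introduces it with ``First we restate Corollary 9 from~\cite{vonluxburg2010}''), and it serves purely as an input to the proof of Corollary~\ref{cor:knn_rate}. So there is no internal proof to compare your attempt against; the only meaningful comparison is with the argument in the cited reference, whose architecture your sketch does, in fact, mirror: a hitting-time representation in terms of the fundamental matrix (your identity $\tfrac{k}{2m}H_{ij} = \tfrac{k}{d_j}\bigl(1+\sum_{t\ge1}(P^t_{jj}-P^t_{ij})\bigr)$ is correct, and isolating the $t=0$ term is the right way to see where $k/d_j$ comes from), uniform degree concentration yielding the $1-c_1 n e^{-c_2 k}$ probability, and a spectral-gap estimate of order $(k/n)^{2/d}$ for the random geometric graph.

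However, as a proof your proposal has a genuine gap, and it sits exactly where you flag it. The two load-bearing steps --- the transient-sum bound $\sum_{t\ge1}(P^t_{jj}-P^t_{ij}) = O(T_{\mathrm{mix}}/k)$, which must beat the naive geometric-sum bound by a factor of $k$, and the uniform spectral-gap lower bound $\gtrsim (k/n)^{2/d}$ --- are each supported only by heuristics and then explicitly deferred to ``the concentration and spectral-convergence machinery of \cite{vonluxburg2010}.'' Since that machinery \emph{is} the proof of the corollary you are trying to establish, the proposal is circular as a standalone argument: what remains after your reductions is not bookkeeping but the substance of the cited result. For what it is worth, the transient-sum step does not require ``tracking the local-time structure'': for a reversible walk with no self-loops one has $P^t_{jj} \le P^2_{jj} = \sum_{i\sim j} (d_j d_i)^{-1} \le 1/d_{\min}$ for all $t\ge 2$ (monotonicity of $\lambda_k^t\mapsto\lambda_k^{t+1}$ on nonnegative powers plus one-step smoothing), and Cauchy--Schwarz with reversibility gives the same $O(1/d_{\min})$ bound off-diagonal; summing increments of size $O(1/k)$ over the $O((n/k)^{2/d})$ steps before relaxation and a geometric tail that starts at level $O(1/k)$ rather than $O(1)$ yields precisely the $n^{2/d}/k^{1+2/d}$ rate. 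Two smaller inaccuracies: the claim $d_{\min}\ge k$ is immediate only for the \emph{symmetric} $k$-NN graph, whereas the statement also covers the mutual graph (where $d_j \le k$ and degree bounds require the density assumptions); and the per-step heuristic ``excess return weight of order $1/d_j$'' is not valid as stated, since early increments $P^t_{jj}-P^t_{ij}$ can be negative and are only controlled after the smoothing argument above.
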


\begin{proof}[Proof of Corollary~\ref{cor:knn_rate}]
We focus on the symmetric $k$-NN graph in which we connect $v_i$ to $v_j$ if
$v_i$ is in the $k$-nearest neighbors of $v_j$ or vice versa. In this graph,
every node has degree $\ge k$ which will be crucial in our analysis. Our goal is
to bound the effective resistance of every edge, so that we can subsequently
bound $r_{max}$ and apply Corollary~\ref{thm:Rperf}. From the definition of
$r_e$ we have:
\begin{eqnarray*}
r_{ij} &=& \frac{1}{2}\left(\frac{H_{ij}}{m} + \frac{H_{ji}}{m}\right)\\
&\le& 2c_3 \frac{n^{2/d}}{k^{2+2/d}} + \frac{1}{d_i} + \frac{1}{d_j}\\
&\le&2c_3 \frac{n^{2/d}}{k^{2+2/d}} + \frac{2}{k}
\end{eqnarray*}
Where the first line is the definition of $r_{ij}$, the second line follows from
Corollary~\ref{cor:luxburg_knn} and the last line follows from the fact that
$d_i \ge k$ for each vertex. Since $k(k/n)^{2/d} \rightarrow \infty$, we see
that $r_{ij} = O(\frac{1}{k})$. Moreover, with this scaling of $k$, that the
probability in Corollary~\ref{cor:luxburg_knn} is going to 1. We can therefore
bound $r_{max}$ as:
\begin{eqnarray*}
r_{max} \le \rho \left(2c_3 \frac{n^{2/d}}{k^{2+2/d}} + \frac{2}{k}\right) = O\left(\frac{\rho}{k}\right)
\end{eqnarray*}
Since the first term is going to zero with $n$. Plugging in this bound on
$r_{max}$ into Theorem~\ref{thm:Rperf} gives the result.
\end{proof}

\subsection{Proof of Corollary~\ref{cor:epsilon_rate}}
As before, we first state Corollary 8 from~\cite{vonluxburg2010}:
\begin{corollary}
Consider an unweighted $\epsilon$-graph built from the sequence $X_1, \ldots,
X_n$ drawn i.i.d. from the density $p$ .Then there exists constants $c_1, \ldots
c_5 > 0$ such that with probability at least $1 - c_1 n \exp(-c_2n\epsilon^d) -
c_3 \exp(-c_4n\epsilon^d)/\epsilon^d$, we have uniformly for all $i \ne j$ that:
\[
\left| \frac{n\epsilon^d}{2m} H_{ij} - \frac{n\epsilon^d}{d_j}\right| \le \frac{c_5}{n\epsilon^{d+1}}
\]
\label{cor:luxburg_epsilon}
\end{corollary}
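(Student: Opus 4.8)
The plan is to prove the hitting-time concentration directly, reducing it to a statement about effective resistances so that the electrical-network machinery of Section 4 does the work. The key tool is Tetali's identity \cite{tetali1991random}, which expresses any hitting time through effective resistances and degrees,
\[
H_{ij} = \frac{1}{2}\sum_{k \in V} d_k \left( r_{ij} + r_{jk} - r_{ik} \right).
\]
The decisive observation is that if every pairwise resistance had the idealized ``lost-in-space'' form $r_{ab} = 1/d_a + 1/d_b$, then the bracketed term collapses to $r_{ij} + r_{jk} - r_{ik} = 2/d_j$ for \emph{every} $k$, so that $H_{ij} = \frac{1}{d_j}\sum_k d_k = 2m/d_j$ exactly. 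Multiplying by $n\epsilon^d/(2m)$ reproduces the claimed centering $n\epsilon^d/d_j$. Thus the entire content of the corollary is the assertion that $r_{ab}$ agrees with $1/d_a + 1/d_b$ up to an additive error small enough that, after weighting by $d_k$ and summing over $k$, it stays below $c_5/(n\epsilon^{d+1})$ in the normalized scale.

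Accordingly, the core of the argument is a two-sided control of $r_{ab}$. For the lower bound $r_{ab} \ge 1/d_a + 1/d_b$ I would invoke the Nash--Williams inequality \cite{lyons2000probability}: the edges incident to $a$ and the edges incident to $b$ form two edge-cutsets separating $a$ from $b$ (disjoint as soon as $a,b$ are non-adjacent; adjacency removes a single shared edge and perturbs the bound only at lower order, which is absorbed), of total unit conductance $d_a$ and $d_b$ respectively, giving $r_{ab} \ge 1/d_a + 1/d_b$. For the matching upper bound I would exhibit an explicit unit flow from $a$ to $b$: spread the unit current uniformly over the $d_a$ edges at $a$, route it through the bulk of the graph, and re-collect it uniformly over the $d_b$ edges at $b$. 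The local pieces contribute energy $d_a(1/d_a)^2 + d_b(1/d_b)^2 = 1/d_a + 1/d_b$, and by Thomson's principle (the flow form of the Dirichlet principle) $r_{ab}$ is at most the energy of any such flow.

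The hard part, and the place where the geometry of the $\epsilon$-graph and the regularity hypotheses of \cite{vonluxburg2010} are genuinely used, is bounding the \emph{bulk} energy of this flow: after the current disperses away from $a$ it must reach $b$'s neighborhood at energy cost $o(1/d)$. This requires a spectral-gap/expansion estimate for the $\epsilon$-graph, which follows from the convergence of the (normalized) graph Laplacian to a weighted Laplace--Beltrami operator on the support of $p$; the constant $c_5$ and the scaling $n\epsilon^{d+2}\to\infty$ enter here, through the mixing rate and the thickness of the ambient domain. Two ingredients run in parallel: (i) uniform degree concentration $d_i = \Theta(n\epsilon^d)$ for all $i$, from a Bernstein/Chernoff bound on the number of sample points in each $\epsilon$-ball together with the lower density bound $p_{\min}$, and (ii) the resistance approximation above, uniform over all pairs. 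The probability statement then arises from a union bound over these two events, producing the term $c_1 n \exp(-c_2 n\epsilon^d)$ from the $n$ degree estimates and the term $c_3\exp(-c_4 n\epsilon^d)/\epsilon^d$ from a net covering the domain by $\Theta(\epsilon^{-d})$ cells. The most delicate step is propagating the per-pair resistance error $\rho_{ab} := r_{ab} - (1/d_a + 1/d_b)$ through the $d_k$-weighted sum: since $\sum_k d_k = 2m = \Theta(n^2\epsilon^d)$ is large, one must show $\rho_{ab}$ is smaller than the $1/d$ main term by a factor $\Theta(1/(n\epsilon^{d+1}))$ and keep this uniform over $k$ simultaneously for all source/target pairs $(i,j)$, which is exactly where the expansion estimate has to be quantitative rather than merely asymptotic.
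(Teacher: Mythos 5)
First, a point of context: the paper does not prove this statement at all --- it is restated verbatim as Corollary 8 of \cite{vonluxburg2010} and used as an imported black box in the proof of Corollary~\ref{cor:epsilon_rate}. So there is no internal proof to compare against; what you have done is attempt to reprove the cited result of von Luxburg, Radl and Hein. Your algebraic skeleton is sound: Tetali's identity does reduce hitting times to pairwise resistances, the collapse $r_{ij}+r_{jk}-r_{ik} = 2/d_j$ under the idealized form $r_{ab}=1/d_a+1/d_b$ is correct and yields $H_{ij}=2m/d_j$ exactly, and the Nash--Williams / Thomson pair is a legitimate device for two-sided resistance control. The bookkeeping is also consistent: writing $\rho_{ab} = r_{ab} - (1/d_a+1/d_b)$, Tetali's identity gives $\left|H_{ij}/2m - 1/d_j\right| \le \tfrac{3}{2}\max_{ab}|\rho_{ab}|$, so it suffices to show $\max_{ab}|\rho_{ab}| = O\left(1/(n^2\epsilon^{2d+1})\right)$, a relative error of order $1/(n\epsilon^{d+1})$ against the main term $1/d = \Theta\left(1/(n\epsilon^d)\right)$, which is compatible with the scaling hypotheses.

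The genuine gap is that this uniform resistance estimate --- which you correctly identify as ``the hard part'' --- is the entire analytic content of the corollary, and you dispose of it by assertion: the bulk flow energy is said to be controlled because the graph Laplacian converges to a weighted Laplace--Beltrami operator. That convergence is itself a deep result, it is asymptotic rather than finite-sample, and extracting from it a quantitative bound of the precise order $1/(n^2\epsilon^{2d+1})$, holding uniformly over all $\Theta(n^2)$ pairs, with failure probability exactly of the stated form $c_1 n \exp(-c_2 n\epsilon^d) + c_3\exp(-c_4 n\epsilon^d)/\epsilon^d$, is precisely the technical work that occupies von Luxburg et al. (who, incidentally, carry it out through quantitative spectral-gap and degree-concentration arguments rather than an explicit flow). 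As written, your argument assumes what is to be proved. A completion along your lines would require an explicit bulk flow construction (e.g., routing current through a chain of $\epsilon$-ball slabs between the two endpoints), a concentration argument counting the points available in each slab, and a computation showing the resulting energy is $O\left(1/(n^2\epsilon^{2d+1})\right)$ --- none of which is sketched, and it is not obvious a naive slab construction achieves this rate rather than the weaker $O(1/(n^2\epsilon^{2d}))$ that mere degree counting would give.
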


\begin{proof}[Proof of Corollary~\ref{cor:epsilon_rate}]
Some manipulation of the result in Corollary~\ref{cor:luxburg_epsilon} reveals
that:
\[
H_{ij} \le \frac{2m}{d_j} + \frac{2c_5 m}{n^2 \epsilon^{2d+2}}
\]
Under our scaling, the second term goes to zero and the probability in
Corollary~\ref{cor:luxburg_epsilon} goes to one, so $H_{ij} = O(m/d_j)$. We will
now give a lower bound on $d_j$. If $X_i$ is in the ball of radius $\epsilon$
centered at $X_j$, then we connect $X_i$ and $X_j$. Thus $d_j$ is exactly the
number of vertices in the $B(X_j; \epsilon)$. The regularity condition on $p$
in~\cite{vonluxburg2010} requires that there exists constants $\alpha$ and
$\epsilon_0$ such that for all $\epsilon < \epsilon_0$ and for all $x \in
\textrm{supp}(p)$, $\textrm{vol}(B(x; \epsilon) \cap \textrm{supp}(p)) \ge
\alpha \textrm{vol}(B(x; \epsilon))$. By this fact, the fact that the density is
lower bounded by $p_{min}$, and by the fact that $\epsilon \rightarrow 0$, we
know that for sufficiently large $n$, $p(B(X_j; \epsilon)) \ge p_{\min} \alpha
c_d \epsilon^d$ where $c_d \epsilon^d$ is the volume of a $d$-dimensional ball
of radius $\epsilon$. The probability that $X_i \in B(X_j; \epsilon)$ is
distributed as a Bernoulli random variable with mean $\ge \alpha p_{min} c_d
\epsilon^d$. By Hoeffding's inequality and a union bound we get that:
\[
d_j \ge n\alpha p_{min} c_d\epsilon^d + \sqrt{n \log (n)} = \Omega(n \epsilon^d)
\]
for all vertices $j$ with probability $\ge 1-1/n$. Using the definition of
$r_{i,j}$ along with the bound on $H_{ij}$ and $d_j$ we have that uniformly for
all pairs $i,j$:
\[
r_{i,j} = O(\frac{1}{n\epsilon^d})
\]
Plugging in this bound into Theorem~\ref{thm:Rperf} gives us the result.
\end{proof}

\end{document}